\documentclass{article}
\usepackage{amsthm}
\newtheorem{theorem}{Theorem}[section]

\newtheorem{remark}{Remark}[section]
\usepackage{arxiv}
\theoremstyle{definition}

\newtheorem{lemma}{Lemma}[section]
\newtheorem{assumption}{Assumption}[section]
\usepackage[utf8]{inputenc} 
\usepackage[T1]{fontenc}    
\usepackage{hyperref}       
\usepackage{url} 
\usepackage{booktabs}
\usepackage{multirow}
\usepackage{xcolor}
\usepackage{colortbl}
\usepackage{array}
\usepackage{amssymb}
\usepackage{comment}
\usepackage{latexsym}
\usepackage{amsmath}
\usepackage{enumitem}
\usepackage{cite}
\usepackage{multirow}
\usepackage{bm}
\usepackage{makecell}
\usepackage{soul}
\usepackage{xcolor}
\usepackage{booktabs}       
\usepackage{amsfonts}       
\usepackage{nicefrac}       
\usepackage{microtype}      
\usepackage{lipsum}
\usepackage{graphicx}
\usepackage{subcaption}
\usepackage{xcolor}

\usepackage{algorithm}
\usepackage{algorithmic}

\graphicspath{ {./images/} }

\newcommand{\be}{\begin{eqnarray}}
\newcommand{\ee}{\end{eqnarray}}

\usepackage{amsfonts} 
\usepackage{tablefootnote} 

\definecolor{codeblue}{RGB}{0, 82, 147}
\definecolor{codegreen}{RGB}{0, 128, 0}
\definecolor{codegray}{RGB}{100, 100, 100}
\definecolor{codeorange}{RGB}{230, 145, 56}
\definecolor{darkerblue}{rgb}{0,0.08,0.45}
\definecolor{royalblue}{RGB}{65,105,225}
\definecolor{lightblue}{RGB}{221,235,247}
\definecolor{fig3blue}{RGB}{47, 122, 232}
\definecolor{fig3red}{RGB}{213, 32, 52}
\definecolor{fig3green}{RGB}{0, 137, 72}
\definecolor{fig3yellow}{RGB}{217, 161, 5}
\definecolor{gray94}{gray}{.94}
\definecolor{gray90}{gray}{.90}
\definecolor{darkgreen}{RGB}{34,139,34}

\newcommand{\green}[1]{\textcolor{darkgreen}{#1}}

\newcommand{\gbf}[1]{\green{\bf{#1}}}

\newcommand{\grow}[1]{\rowcolor{gray94}{#1}}
\newcommand{\brow}[1]{\rowcolor{lightblue}{#1}}

\title{Lightweight Geometric Adaptation for Training Physics-Informed Neural Networks}

\author{
  Kang An\footnotemark[1] \\
  Department of Computational Applied Mathematics \\
  and Operations Research \\
  Rice University \\
  Houston, USA \\
  \texttt{kang.an@rice.edu}
  \And
  Chenhao Si\footnotemark[1] \\
  School of Data Science \\
  The Chinese University of Hong Kong, Shenzhen \\
  Shenzhen, China \\
  \texttt{chenhao.si@link.cuhk.edu.cn}
  \And
  Shiqian Ma \\
  Department of Computational Applied Mathematics \\
  and Operations Research \\
  Rice University \\
  Houston, USA \\
  \texttt{shiqian.ma@rice.edu}
  \And
  Ming Yan\footnotemark[2] \\
  School of Data Science \\
  The Chinese University of Hong Kong, Shenzhen \\
  Shenzhen, China \\
  \texttt{yanming@cuhk.edu.cn}
}

\begin{document}
\maketitle
\footnotetext[1]{Kang An and Chenhao Si contributed equally to this work. Their authorship order was determined randomly.}
\footnotetext[2]{Corresponding author.}

\begin{abstract}
Physics-Informed Neural Networks (PINNs) often suffer from slow convergence, training instability, and reduced accuracy on challenging partial differential equations due to the anisotropic and rapidly varying geometry of their loss landscapes. We propose a lightweight curvature-aware optimization framework that augments existing first-order optimizers with an adaptive predictive correction based on secant information. Consecutive gradient differences are used as a cheap proxy for local geometric change, together with a step-normalized secant curvature indicator to control the correction strength. The framework is plug-and-play, computationally efficient, and broadly compatible with existing optimizers, without explicitly forming second-order matrices. Experiments on diverse PDE benchmarks show consistent improvements in convergence speed, training stability, and solution accuracy over standard optimizers and strong baselines, including on the high-dimensional heat equation, Gray--Scott system, Belousov--Zhabotinsky system, and 2D Kuramoto--Sivashinsky system.

\end{abstract}


\section{Introduction}
\label{Sec intro}
Physics-Informed Neural Networks (PINNs) have emerged as a versatile framework for integrating physical laws into neural networks~\cite{raissi2019physics, karniadakis2021physics}. By incorporating governing Partial Differential Equations (PDEs) into the training objective via automatic differentiation, PINNs enable the solution of both forward and inverse problems without requiring dense observational data. This paradigm combines two complementary strengths: the expressive power of neural networks in approximating complex solution spaces, and the ability to enforce physical constraints through PDE residuals together with boundary and initial conditions. Owing to these advantages, PINNs have demonstrated effectiveness across a broad range of applications, including heat transfer~\cite{xu2023physics,cai2021physics,si2025initialization,majumdar2025hxpinn}, solid mechanics~\cite{hu2024physics,faroughi2024physics}, stochastic systems~\cite{zhang2020learning,chen2021solving}, and uncertainty quantification~\cite{yang2019adversarial,zhang2019quantifying,yang2021b}.

Despite their strong empirical promise, PINNs remain notoriously difficult to train efficiently and reliably across many practically important regimes. For stiff PDEs and problems with sharp gradients or boundary/interior layers, optimization often becomes slow and unstable, frequently leading to degraded accuracy~\cite{mcclenny2023self,wang2021understanding}. Multi-scale and high-frequency settings, such as Helmholtz equations, introduce additional challenges due to spectral bias and optimization stiffness, causing PINNs to capture low-frequency components more readily than oscillatory structures~\cite{wang2021understanding, wang2021eigenvector, krishnapriyan2021characterizing, liu2024binary}. Moreover, because the PINN objective couples PDE residuals with boundary and initial constraints, different loss components may evolve on highly disparate scales. This can lead to gradient imbalance, incomplete enforcement of physical constraints, and pronounced sensitivity to optimizer choice, sampling strategy, and hyperparameter tuning~\cite{wang2021understanding, wang2022and}. To mitigate these issues, prior work has explored adaptive loss balancing, residual-based sampling, curriculum strategies, and improved initialization~\cite{wang2021understanding, wang2022and, mcclenny2023self, anagnostopoulos2024residual, toscano2025variational, si2025convolution}. Despite meaningful progress, robust and consistently efficient PINN training remains an open challenge.

Recent studies increasingly suggest that these training difficulties are not merely empirical phenomena, but are closely tied to the geometry of the underlying optimization landscape. More broadly in deep learning, loss-landscape analyses have shown that local curvature, sharpness, and directional anisotropy play a central role in shaping optimization behavior and solution quality~\cite{li2018visualizing, zhao2025exploring}. For PINNs, this connection is even more pronounced: the coupling of PDE residuals, boundary and initial constraints, and higher-order differential operators often gives rise to highly ill-conditioned and anisotropic objectives, with optimization trajectories that traverse regions of markedly different local geometry~\cite{rathore2024challenges}. Such landscapes can contain narrow or stiff directions together with relatively flat valleys, making standard first-order updates poorly matched to the local structure and contributing to slow convergence, instability, and sensitivity to training settings. These observations indicate that improving PINN training requires not only better balancing of losses or samples, but also optimization mechanisms that can respond more effectively to the evolving local geometry of the loss landscape.

Recent studies have therefore revisited PINN training from the perspective of optimization algorithm design. Empirical and theoretical results show that optimizer choice strongly affects convergence speed, training stability, and final accuracy in PINNs~\cite{kiyani2025optimizing, urban2025unveiling}. While second-order and quasi-Newton perspectives can better capture the structure of PINN objectives~\cite{rathore2024challenges, wang2025gradient, urban2025unveiling,jnini2026curvatureawareoptimizationhighaccuracyphysicsinformed}, existing approaches often introduce substantial computational overhead, require problem-specific tuning, or remain difficult to integrate into standard training pipelines. These limitations motivate a lightweight and broadly compatible framework that can exploit local curvature information more effectively in PINN optimization.

In this work, we propose a curvature-aware optimization framework for PINNs that augments existing first-order optimizers with local geometric information. The framework is lightweight, plug-and-play, and broadly compatible, enabling improved adaptation to the anisotropic and rapidly varying landscapes that arise in PINN training. As a result, it improves convergence speed, training stability, and solution accuracy across diverse problems.
The main contributions are summarized as follows:
\begin{itemize}
    \item We highlight rapidly varying local curvature as a key optimization challenge in PINN training, and revisit PINN optimization from the perspective of matching update dynamics to the anisotropic and ill-conditioned geometry of PINN objectives.
    
    \item We propose a lightweight and plug-and-play curvature-aware optimization framework that explicitly incorporates local geometric information into the update process, enabling more effective optimization for PINNs.
    
    \item The proposed framework is broadly compatible with existing first-order optimizers and is designed to improve geometric adaptation without explicit construction of full second-order matrices, making it practical for integration into modern PINN training pipelines.
    
    \item We validate the proposed framework on diverse PDE benchmarks and demonstrate consistent improvements in convergence speed, training stability, and solution accuracy over standard optimizers and strong optimizer baselines, highlighting the benefit of explicit curvature-aware adaptation in PINNs.
\end{itemize}
The remainder of this paper is organized as follows. Section \ref{Sec PINN} briefly introduces PINNs. Section \ref{sec:instantaneous_gradient_not_enough}-\ref{sec:secant_curvature_indicator} analyze the PINN loss landscape and its associated curvature characteristics. Section \ref{sec Curvature-aware optimization framework} then introduces the proposed curvature-aware optimization framework. Section \ref{sec:convergence} provides a rigorous convergence analysis of the framework based on Adam. Finally, Section \ref{Section: Experiment} reports numerical results on several PDE benchmarks.

\section{Physics-Informed Neural Network}
\label{Sec PINN}
Denote the spatial domain as $\Omega \subset \mathbb{R}^n$ with boundary $\partial \Omega$, and let $T$ represent the time domain. The spatial-temporal variable is given by $(\mathbf{x}, t) \in \Omega \times T$. A time-dependent partial differential equation (PDE) over this domain is defined as follows:
\begin{align}
    \mathcal{F}[u](\mathbf{x}, t) &= 0, \label{(1)} \quad (\mathbf{x}, t) \in \Omega \times T, \\
    \mathcal{B}[u](\mathbf{x}, t) &= 0, \label{(2)} \quad (\mathbf{x}, t) \in \partial \Omega \times T, \quad \text{(boundary condition)} \\
    \mathcal{I}[u](\mathbf{x}, 0) &= 0, \quad \mathbf{x} \in \Omega, \hspace{51pt} \text{(initial condition)}
\end{align}
where $\mathcal{F}$, $\mathcal{B}$, and $\mathcal{I}$ are differential operators, and $u(\mathbf{x}, t)$ is the solution to the PDE, subject to boundary and initial conditions.

A PINN parameterized by $\theta$ approximates the solution $u(\mathbf{x}, t)$. The input to the neural network is $(\mathbf{x}, t)$, and the approximation is denoted by $\hat{u}(\theta)(\mathbf{x}, t)$. The PINN minimizes the following objective function:
\begin{align}
    \mathcal{L}(\theta) = \lambda_F \mathcal{L}_F(\theta) + \lambda_B \mathcal{L}_B(\theta) + \lambda_I \mathcal{L}_I(\theta), \label{(3)}
\end{align}
where
\begin{align}
    \mathcal{L}_F(\theta) &= \frac{1}{N_f} \sum_{(\mathbf{x}, t) \in \Omega_F} \big| \mathcal{F}[\hat{u}(\theta)](\mathbf{x}, t) \big|^2, \label{(4)} \\
    \mathcal{L}_B(\theta) &= \frac{1}{N_b} \sum_{(\mathbf{x}, t) \in \Omega_B} \big| \mathcal{B}[\hat{u}(\theta)](\mathbf{x}, t) \big|^2, \label{(5)} \\
    \mathcal{L}_I(\theta) &= \frac{1}{N_0} \sum_{(\mathbf{x}, 0) \in \Omega_I} \big| \mathcal{I}[\hat{u}(\theta)](\mathbf{x}, 0) \big|^2. \label{(6)}
\end{align}
Here, $\Omega_F$, $\Omega_B$, and $\Omega_I$ are the training sets for the PDE residual, boundary condition, and initial condition, respectively, with cardinality $N_f$, $N_b$, and $N_0$. The weights $\lambda_F$, $\lambda_B$, and $\lambda_I$ are hyperparameters tuning the contributions of each loss component. Notably, $\Omega_F$ may include points on the boundary or at the initial time, allowing $\Omega_F \cap \Omega_B$ and $\Omega_F \cap \Omega_I$ to be non-empty.

\section{Why the instantaneous gradient is not enough in PINNs}
\label{sec:instantaneous_gradient_not_enough}

The optimization difficulty of PINNs is rooted not only in gradient magnitude imbalance, but also in the highly heterogeneous geometry induced by stiff PDE residuals \cite{kiyani2025optimizing,rathore2024challenges}. When low-frequency boundary or initial-condition terms interact with high-frequency residual constraints, the resulting loss landscape often becomes strongly anisotropic: some directions remain relatively flat, while others are sharply curved and rapidly varying. In such regions, even a small parameter displacement may lead to a noticeable change in the gradient field, making the optimization dynamics highly sensitive to local geometry \cite{kiyani2025optimizing,krishnapriyan2021characterizing,anagnostopoulos2024residual,rathore2024challenges}.

\begin{figure}[!htbp]
    \centering
    \begin{minipage}{0.49\textwidth}
        \centering
        \includegraphics[width=\linewidth, trim=0cm 1cm 0cm 1cm, clip]{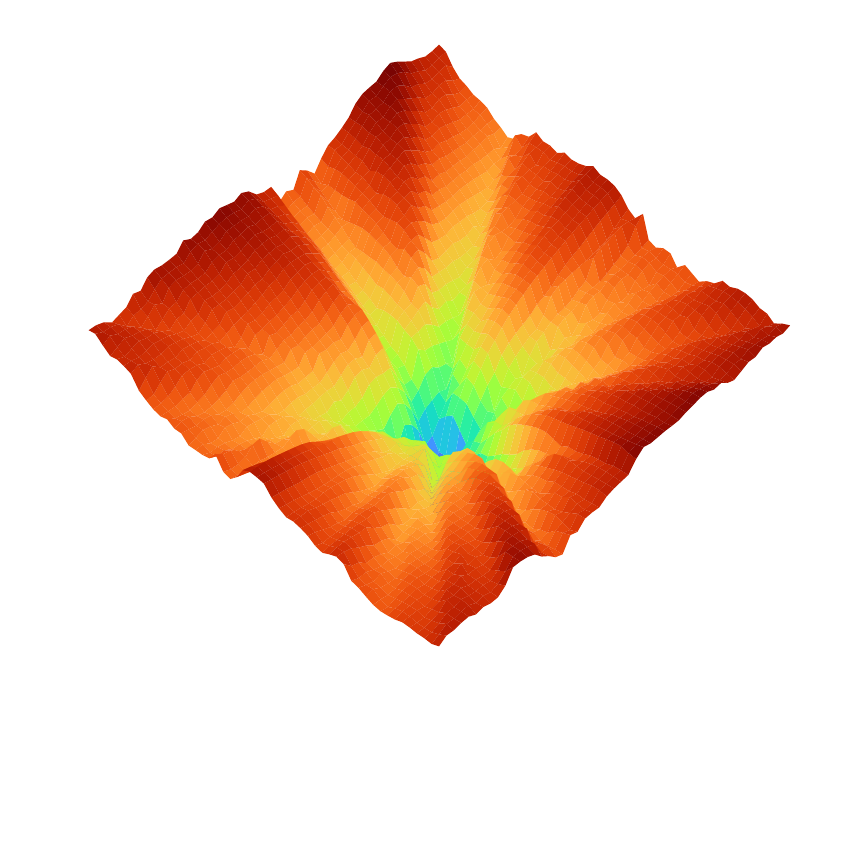}
    \end{minipage}
    \hfill 
    \begin{minipage}{0.49\textwidth}
        \centering
        \includegraphics[width=\linewidth, trim=0cm 1cm 0cm 1cm, clip]{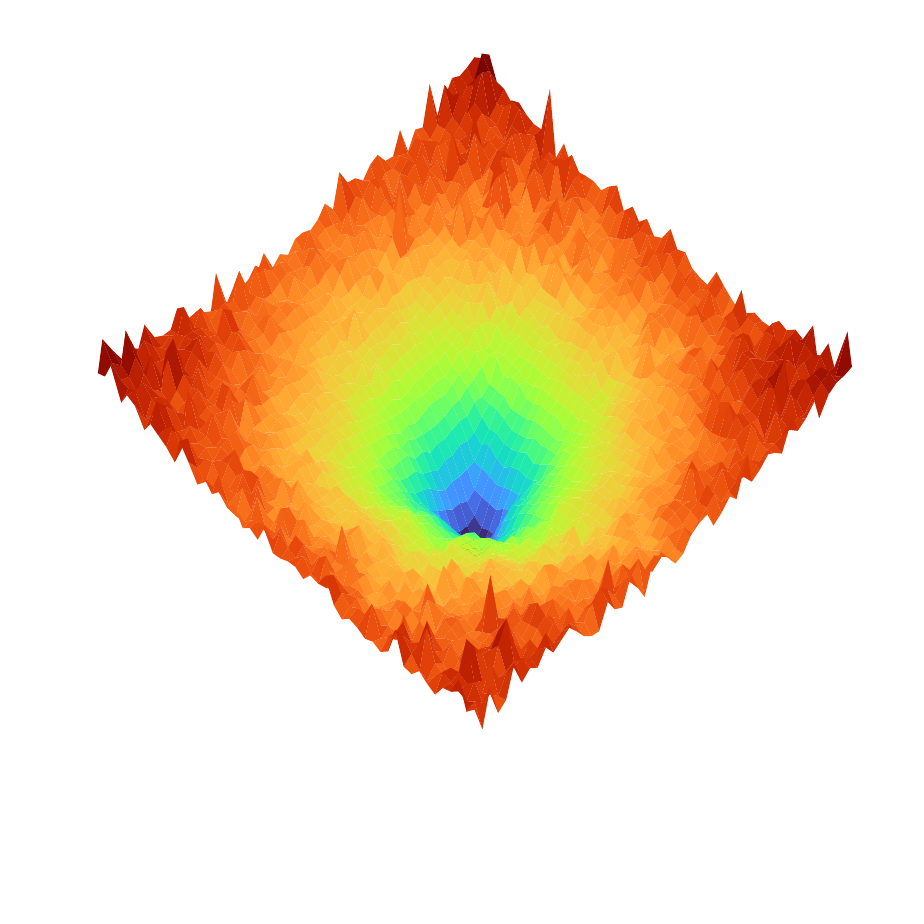}
    \end{minipage}
    \caption{Projection of the loss landscape $\mathcal{L} = \mathcal{L}_{I} + \lambda \mathcal{L}_{F}$ for the viscous Burgers' equation onto a 2D parameter subspace using random directions. The vertical axis shows the logarithm of the scalar loss value, while the two horizontal axes represent the perturbation scale along the two random directions. The left subfigure shows the landscape for the data loss alone ($\lambda = 0$), consisting of the initial and boundary condition constraints. The right subfigure displays the PDE residual landscape, illustrating the highly non-convex and rugged topography introduced by the physics constraints.}
    \label{fig:landscape}
\end{figure}

This effect is illustrated in Fig.~\ref{fig:landscape}. Compared with the relatively smooth basin of a purely data-driven objective, the inclusion of the stiff PDE residual produces a much more rugged landscape with sharper valleys and more rapidly varying geometry. The main difficulty is not nonconvexity alone, but the coexistence of stiff and flat directions: a step size small enough to remain stable in sharp directions can become too conservative to make meaningful progress in flatter ones.

In this regime, relying only on the instantaneous gradient is often insufficient \cite{zhao2025exploring}. Furthermore, standard momentum—which relies on an exponential moving average (EMA) of past gradients—is fundamentally reactive. It accumulates a lagging historical average that cannot anticipate sudden geometric changes. For PINNs, where local stiffness can vary substantially along the optimization path, it is therefore useful to supplement the current gradient with predictive information that reflects recent geometric change. A natural question is what additional signal can provide useful look-ahead information about such local geometric variation without incurring the cost of second-order methods.

A simple and practical choice is the consecutive gradient difference
\begin{align}
y_k := g_k - g_{k-1},
\end{align}
where $g_k = \nabla \mathcal{L}(\theta_k)$ is the gradient at the current iterate. Unlike the instantaneous gradient, which only reflects the local descent direction at $\theta_k$, the difference between consecutive gradients captures how the gradient field has changed along the recent optimization trajectory. In this sense, it provides a lightweight trajectory-level signal of recent local geometric variation.

This signal is attractive because it naturally supports a predictive correction: if the optimization trajectory continues its recent trend, then the consecutive gradient difference provides a first-order surrogate for how the gradient may evolve over a short horizon. Related ideas also appear in optimistic, extragradient-type, and gradient-difference-based accelerated methods \cite{daskalakis2025limitpointsoptimisticgradient,rakhlin2013optimizationlearninggamespredictable,mertikopoulos2018optimisticmirrordescentsaddlepoint,mokhtari2019unifiedanalysisextragradientoptimistic,JMLR:v17:15-084,xie2024adan,nesterov1983method,nesterov2013introductory}. For our purposes, the key point is that $y_k$ offers a cheap trajectory-level proxy for short-horizon future gradient variation. In PINNs, however, this signal should not be used with a fixed strength: due to strong anisotropy and local stiffness, the same extrapolative correction may help in flatter regions but become overly aggressive in sharper ones. The practical issue is therefore not only whether gradient difference is informative, but also when it should be trusted and how strongly it should influence the update. This suggests that the predictive correction should be modulated according to the recent directional geometry of the optimization trajectory.

\subsection{Secant-based gating for predictive correction}
\label{sec:secant_curvature_indicator}
The previous subsection motivates the need for predictive information beyond the instantaneous gradient in PINNs. The remaining question is how strongly such predictive information should influence the update throughout training. In particular, if consecutive gradient difference is used as a low-cost extrapolative signal, its utility need not be uniform across iterations. Before introducing a geometry-dependent modulation mechanism, we first illustrate this variability through a realized one-step diagnostic.

To quantify the realized utility of secant-style predictive correction, we define the relative secant prediction error
\begin{align}
R_k
:=
\frac{
\|\nabla \mathcal L(\theta_{k+1})-(g_k+\tau_k y_k)\|_2
}{
\|\nabla \mathcal L(\theta_{k+1})-g_k\|_2
},
\label{eq:error_ratio}
\end{align}
where $g_k:=\nabla \mathcal L(\theta_k)$ is the current gradient, $y_k:=g_k-g_{k-1}$ is the consecutive gradient difference, and $\tau_k$ denotes the effective extrapolative strength along the recent trajectory. This ratio compares the secant-corrected surrogate $g_k+\tau_k y_k$ against the baseline surrogate $g_k$ in predicting the next-step gradient. Smaller values of $R_k$ indicate better realized predictive utility of the secant correction, while larger values indicate that the correction is becoming less reliable. In particular, $R_k>1$ means that the secant-corrected surrogate is less accurate than  $g_k$ alone.

Figure~\ref{fig:kappa} shows the histories of this diagnostic for AdamW. The main observation is that the realized value of predictive correction varies substantially during PINN training: the error exhibits pronounced fluctuations over time, and some iterations enter the regime $R_k>1$. This indicates that a fixed extrapolative strength is not uniformly appropriate. In other words, the practical issue is not whether gradient difference can provide useful look-ahead information, but under what local geometry it should contribute more strongly or more conservatively to the update.

\begin{figure}[!htb]
\centering
    \begin{minipage}{0.45\textwidth}
     \centering
     \includegraphics[width=\linewidth]{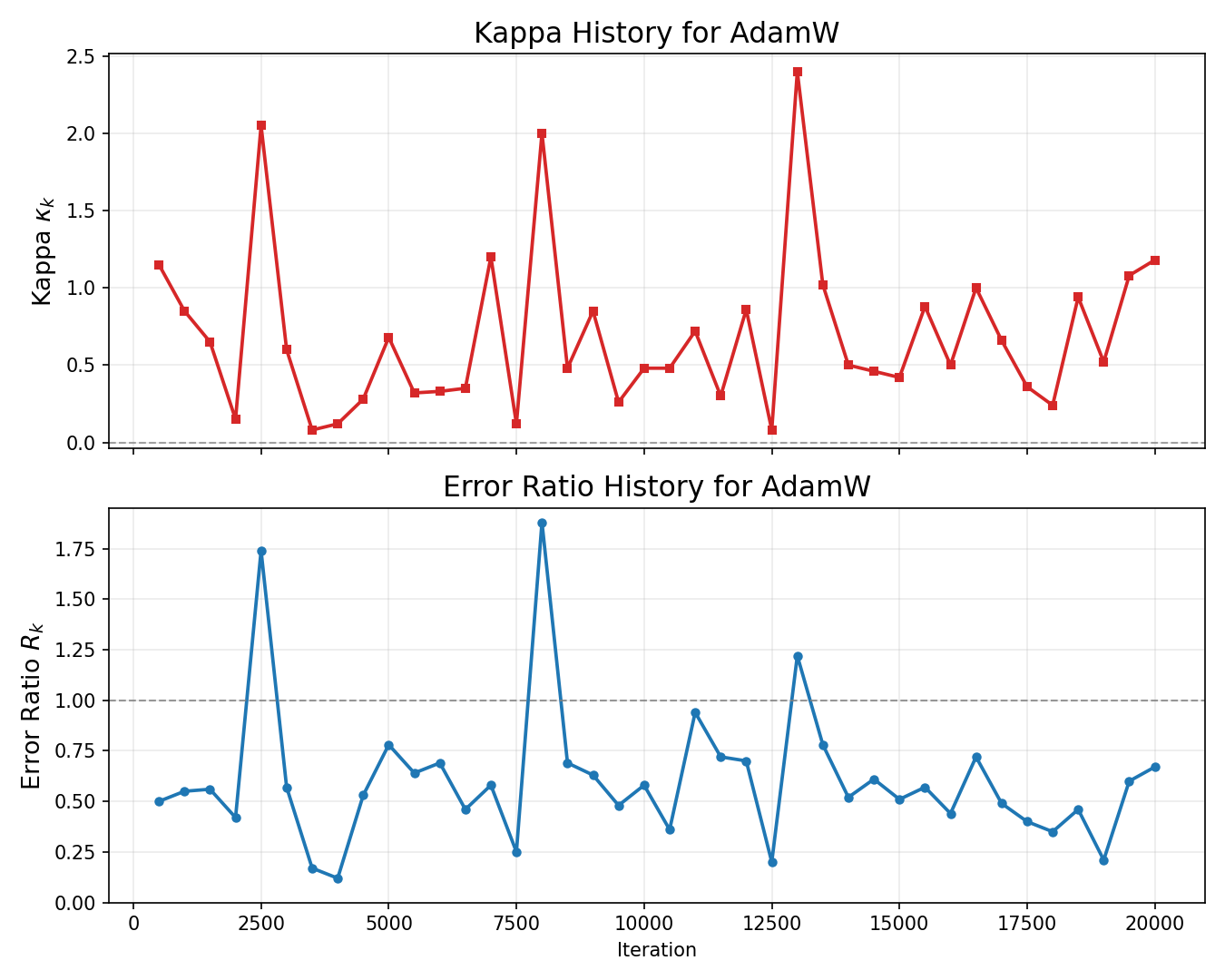} 
   \end{minipage} 
   \begin{minipage}{0.45\textwidth}
     \centering
     \includegraphics[width=\linewidth]{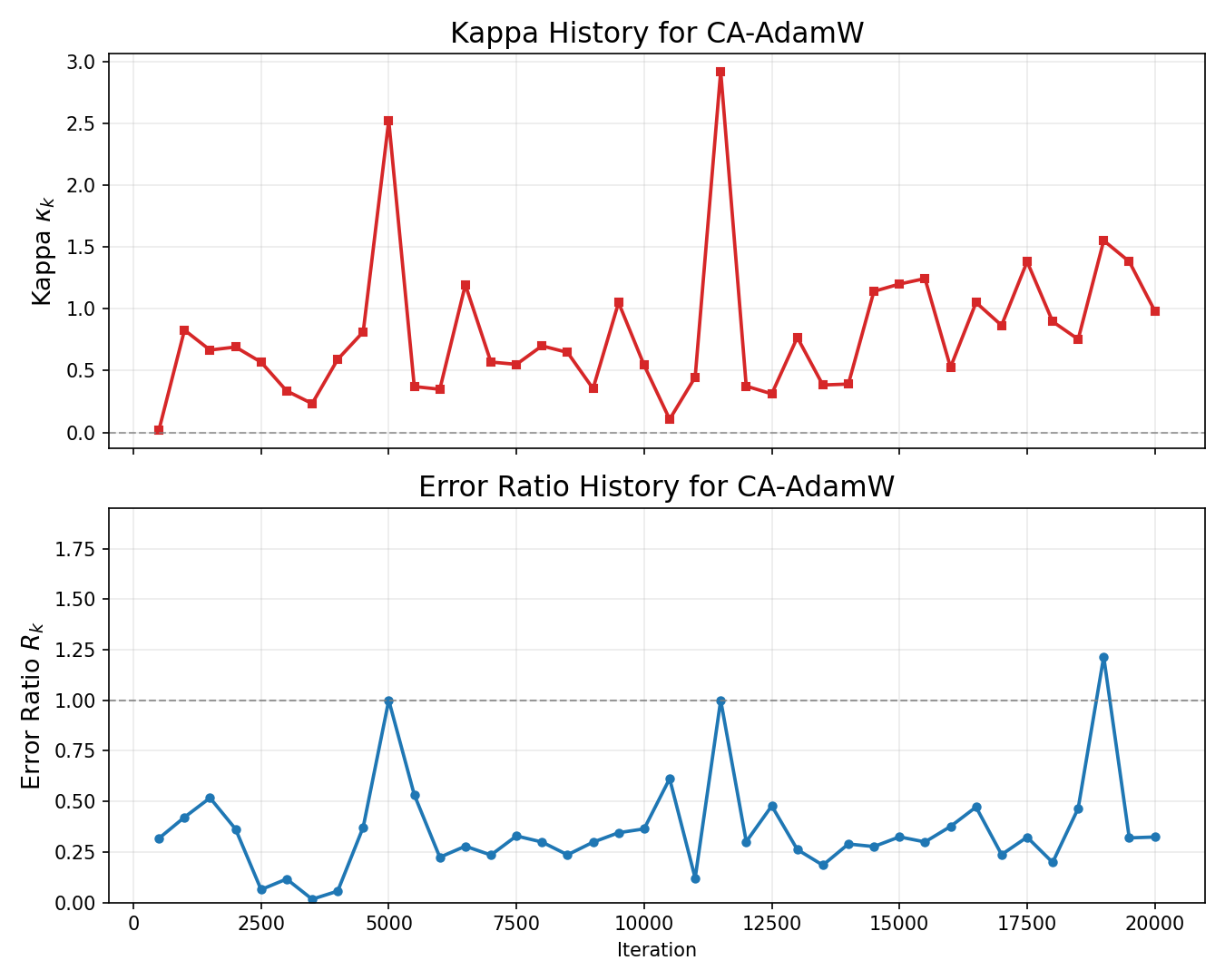} 
   \end{minipage} 

   \caption{Histories of the secant curvature indicator $\kappa_k$ (top) and the realized secant prediction error ratio $R_k$ (bottom) for AdamW (left) and CA-AdamW (right). While CA-AdamW does not consistently reduce the directional curvature indicator itself, it substantially suppresses large error-ratio spikes and keeps $R_k$ below or near the baseline-failure regime for a larger portion of training. This supports the intended role of curvature-aware gating, which smoothly modulates the strength of secant-based predictive correction according to the local directional geometry.}
\label{fig:kappa}
\end{figure}

To understand the geometric source of this variability, we examine how the gradient difference $y_k$ acts as a predictive signal for the next-step gradient. To formalize the recent trajectory, we define the recent step displacement alongside our gradient difference:
\begin{align}
s_k := \theta_k-\theta_{k-1},
\qquad
y_k := g_k-g_{k-1}.
\label{eq:secant_pair}
\end{align}

By the fundamental theorem of calculus applied to the gradient field along the segment from $\theta_{k-1}$ to $\theta_k$, we have
\begin{align*}
y_k
=
\int_0^1 H(\theta_{k-1}+t s_k)\,s_k\,dt,
\end{align*}
where $H(\theta)=\nabla^2\mathcal{L}(\theta)$ is the Hessian. Defining the segment-averaged Hessian $\bar H_k := \int_0^1 H(\theta_{k-1}+t s_k)\,dt$, we obtain:
\begin{align*}y_k=\bar H_k s_k.
\end{align*}
Hence, the consecutive gradient difference is the secant response of the gradient field along the recent trajectory direction, and can be interpreted as a low-cost directional curvature signal.

To apply this signal to the upcoming step, define the next step displacement $d_{k+1}:= \theta_{k+1} - \theta_k$. We relate this new displacement to our recent trajectory by writing: \begin{align*}d_{k+1}=\tau_k s_k+\delta_k,\end{align*}
where $\tau_k$, as introduced above, measures how much the next step continues the recent trajectory direction, and $\delta_k =d_{k+1}-\tau_k s_k$ measures the deviation from that local directional persistence.

Now, a first-order expansion at $\theta_k$ gives the next-step gradient:
\begin{align*}\nabla \mathcal{L}(\theta_{k+1}) = g_k + H(\theta_k)d_{k+1}+ o(\|d_{k+1}\|_2).\end{align*}
Substituting our trajectory decomposition into this expansion and using $y_k = \bar H_k s_k$ yields:
\begin{align}\nabla \mathcal{L}(\theta_{k+1}) = g_k+\tau_k y_k +\tau_k\bigl(H(\theta_k)-\bar H_k\bigr)s_k +H(\theta_k)\delta_k +o(\|d_{k+1}\|_2).
\label{eq:predictive_error_decomposition}
\end{align}
This equation provides an explanation for the variability observed in the realized error $R_k$. The term $g_k+\tau_k y_k$ is the desired predictive component; it uses the recent secant response to approximate the leading variation of the next-step gradient. The remaining terms explain why the realized predictive quality can change substantially across iterations, even when the gradient difference remains available at every step.

More specifically, these remaining terms isolate two distinct sources of approximation error. The term $\tau_k\bigl(H(\theta_k)-\bar H_k\bigr)s_k$ captures curvature mismatch: it remains small when the curvature along the recent segment is consistent with the current iterate but grows large when the local geometry changes rapidly between consecutive steps. The term $H(\theta_k)\delta_k$ captures the penalty for directional deviation: it is small when the upcoming step remains coherent with the recent trajectory and large when the optimizer shifts course.

Crucially, this directional penalty is not determined by the deviation $\delta_k$ alone; it is scaled by the Hessian at the current point. As a result, identical shifts in trajectory can yield vastly different prediction errors depending on the local geometry's stiffness. In regions of strong local curvature, even a moderate deviation from the recent path is amplified into a severe secant-extrapolation error.

This observation is the cornerstone of our proposed method. It suggests that the role of local geometry is not to dictate whether the secant signal is useful, but rather to determine how strongly it should influence the current update. In regions of mild curvature variation where the recent direction remains predictive, the secant correction can be applied with confidence. Conversely, when the local geometry is stiff or rapidly changing, the identical correction term becomes aggressive and prone to overshooting, necessitating a reduced coefficient. This dynamic is especially critical in PINNs, whose loss landscapes are notorious for exhibiting strong anisotropy and stiff directional geometry due to the interplay of heterogeneous training objectives and PDE residual constraints \cite{kiyani2025optimizing,rathore2024challenges,krishnapriyan2021characterizing}. Ultimately, \eqref{eq:predictive_error_decomposition} frames the practical challenge as an extrapolation-gain control problem.

However, the theoretical quantities in \eqref{eq:predictive_error_decomposition} cannot be computed online. The diagnostic ratio $R_k$ relies on the unobserved future gradient $\nabla \mathcal{L}(\theta_{k+1})$, and the decomposition itself demands explicit Hessian evaluations that are computationally prohibitive during training. Therefore, the algorithm requires a lightweight, online surrogate—a metric that continuously measures local geometry to dynamically scale the secant correction along the recent trajectory.

For this purpose, we introduce the secant curvature indicator:
\begin{align*}\kappa_k := \frac{s_k^\top y_k}{\|s_k\|_2^2}=\frac{s_k^\top \bar H_k s_k}{\|s_k\|_2^2},\end{align*}
which is the Rayleigh quotient of the segment-averaged Hessian along the recent displacement direction. Thus, $\kappa_k$ measures the directional curvature actually experienced by the optimizer along its latest trajectory segment.

The role of $\kappa_k$ is not to precisely estimate the future error in \eqref{eq:error_ratio}, but rather to serve as a cheap, online indicator of how aggressively secant-based predictive correction should be applied. When $\kappa_k$ is small, the recent direction exhibits mild curvature, allowing the secant response $y_k$ to be incorporated more strongly without making the update excessively reactive. Conversely, when $\kappa_k$ is large, the recent direction is stiff. In this regime, applying a fixed extrapolative gain would amplify curvature and directional mismatches, likely leading to overshoot. Therefore, a larger $\kappa_k$ dictates that the correction coefficient should be damped.

We therefore construct the corrected gradient as 
\begin{align*}
\tilde g_k = g_k + \alpha_k y_k,
\end{align*}
where the gating coefficient $\alpha_k$ is a monotonically decreasing function of $\kappa_k$. In our implementation, we use
\begin{align*}
\alpha_k = \alpha_{\mathrm{base}}\bigl(1+\tanh(-\kappa_k)\bigr),
\end{align*}
where $\alpha_{\mathrm{base}}>0$ is the nominal correction level. 
By smoothly modulating $\alpha_k$, the secant correction is strengthened when the local geometry supports a bold predictive step, and progressively suppressed when local stiffness warns against aggressive extrapolation.

The optimizer-level effect of this geometry-aware modulation is illustrated in Fig.~\ref{fig:kappa}. As shown in the right panel, compared with AdamW, our curvature-aware variant produces a substantially more stable error-ratio trajectory with fewer and smaller spikes, even when elevated values of $\kappa_k$ arise during training. This behavior confirms the intended role of curvature-aware gating: smoothly adapting the strength of the predictive correction to the local directional geometry, rather than blindly applying a fixed extrapolative gain throughout optimization.

\subsection{Curvature-aware optimization framework}
\label{sec Curvature-aware optimization framework}
While related in spirit to Adan's use of gradient difference, our framework targets the specific geometric heterogeneity of PINNs. Rather than relying on a fixed extrapolation form or violently tuning the optimizer's internal EMA hyperparameters step-by-step, we modulate the look-ahead signal through a trajectory-level secant gate.

We now instantiate the proposed mechanism within AdamW to illustrate its practical implementation. Although the proposed curvature-aware trajectory correction is optimizer-agnostic, we use AdamW as a representative example because its update structure makes the integration especially transparent. The resulting optimizer, referred to as \textbf{CA-AdamW}, preserves the original AdamW update rule while replacing the raw gradient input with a curvature-gated boosted gradient. In this way, the method remains lightweight and fully first-order, while incorporating the trajectory-level correction developed in the previous subsections.

\begin{algorithm}[!ht]
   \caption{Curvature-Aware AdamW (CA-AdamW)}
   \label{alg:CAAdamW}
\begin{algorithmic}[1]
    \STATE {\bfseries Input:} initialization $\theta^0$, number of iterations $T$, $\beta_1$, $\beta_2$, $\beta_a$, $\delta$, $\alpha_{\mathrm{base}}$, $\lambda$, $\{\eta^k\}_{k=1}^T$
    \STATE Initialize $m^0 \leftarrow \mathbf{0}$, $v^0 \leftarrow \mathbf{0}$, $a^0 \leftarrow \mathbf{0}$, $g^{-1} \leftarrow \mathbf{0}$
    \FOR{$k=1$ {\bfseries to} $T$}
        \STATE $g^k = \frac{1}{|\mathcal{Z}^k|}\sum_{\zeta_j \in \mathcal{Z}^k} \nabla \ell(\theta^{k-1}; \zeta_j)$
        \IF{$k = 1$}
            \STATE $\alpha^k = 0$
            \STATE $\tilde{g}^k = g^k$
        \ELSE
            \STATE $a^k = \beta_a a^{k-1} + (1-\beta_a)(g^k-g^{k-1})$
            \STATE $s^k = \theta^{k-1}-\theta^{k-2}$
            \STATE $\kappa^k = \frac{\langle s^k, g^k-g^{k-1}\rangle}{\|s^k\|_2^2}$
            \STATE $\alpha^k = \alpha_{\mathrm{base}}(1+\tanh(-\kappa^k))$
            \STATE $\tilde{g}^k = g^k+\alpha^k a^k$
        \ENDIF
        \STATE $m^k = \beta_1 m^{k-1} + (1-\beta_1)\tilde{g}^k$
        \STATE $v^k = \beta_2 v^{k-1} + (1-\beta_2)\tilde{g}^k \odot \tilde{g}^k$
        \STATE $\hat{m}^k = \frac{m^k}{1-\beta_1^k}$, $\hat{v}^k = \frac{v^k}{1-\beta_2^k}$
        \STATE $\theta^k = \theta^{k-1} - \eta^k \left( \frac{\hat{m}^k}{\sqrt{\hat{v}^k}+\delta} + \lambda \theta^{k-1} \right)$
    \ENDFOR
\end{algorithmic}
\end{algorithm}

Algorithm~\ref{alg:Adam_NC} proceeds in three stages. First, it constructs recent trajectory information from consecutive iterations. The parameter displacement
$s_k = \theta_k - \theta_{k-1}$
captures the latest update direction, while the correction signal is formed using an exponential moving average
\begin{equation}
a_k = \beta_a a_{k-1} + (1-\beta_a)(g_k-g_{k-1}),
\end{equation}
where $\beta_a \in [0,1)$ is a decay factor. Compared to the raw consecutive gradient difference, this EMA-smoothed form provides a more stable estimate of recent gradient variation under stochastic training. This stability is particularly beneficial in PINNs, given collocation and residual-sampling noise.

Second, the algorithm computes the secant curvature indicator $\kappa_k$ and maps it to the adaptive boost coefficient
\begin{equation}
\alpha_k
=
\alpha_{\mathrm{base}}
\left(1+\tanh(-\kappa_k)\right).
\end{equation}
We let $\alpha_0=0$ since $s_0=0$. Here, the raw secant difference is retained in $\kappa_k$ to preserve its direct geometric interpretation, whereas the EMA-smoothed signal $a_k$ is used in the correction term for improved robustness. This realizes the curvature-aware gating mechanism introduced previously: a stronger correction is applied when the recent trajectory exhibits a flat directional response, while the correction is sharply suppressed when local stiffness indicates a higher risk of amplifying oscillations.

Finally, the boosted gradient
\begin{equation}
\tilde g_k = g_k + \alpha_k a_k
\end{equation}
is used in place of the raw gradient in the AdamW moment updates. That is, both the first and second moments are updated using $\tilde g_k$ rather than $g_k$. As a result, the base optimizer receives a gradient signal that combines the current descent information with a smoothed, curvature-gated estimate of recent gradient evolution. The underlying AdamW machinery remains unchanged; only its gradient input is enriched by the proposed trajectory-level correction.

Although we present CA-AdamW here for concreteness, the proposed CA framework is not specific to AdamW. Its core operations, constructing a smoothed gradient-difference signal, computing a secant-curvature gate, and injecting the boosted gradient into the optimizer input, carry over directly to other first-order methods. Corresponding instantiations for SOAP and Muon are provided in Algorithm~\ref{alg:CA-SOAP} and \ref{alg:CA-Muon}, and all three optimizer families are evaluated in Section \ref{Section: Experiment}.

\section{Convergence Analysis of Curvature-Aware Adam}
\label{sec:convergence}

To facilitate theoretical analysis, we consider a simplified variant of CA-AdamW that omits weight decay, bias correction, and the EMA smoothing of the gradient-difference term, while retaining the core curvature-aware gating mechanism introduced in Section~\ref{sec Curvature-aware optimization framework}. Specifically, the smoothed correction signal $a_k$ is replaced by the raw consecutive gradient difference $g_k-g_{k-1}$.

We consider the stochastic optimization problem
\begin{equation}
\min_{\theta\in\mathbb{R}^d} F(\theta):=\mathbb{E}_{\zeta}[\ell(\theta;\zeta)].
\end{equation}
Let $g_k$ be a stochastic gradient estimator of $\nabla F(\theta_k)$. The simplified recursion is
\begin{align}
\tilde g_k
&:=
g_k+\alpha_k(g_k-g_{k-1}),
\qquad
&m_k
&:=
\beta_1 m_{k-1}+(1-\beta_1)\tilde g_k,\\
v_k
&:=
\beta_2 v_{k-1}+(1-\beta_2)\tilde g_k^{2},
\qquad
&\theta_{k+1}
&:=
\theta_k-\eta \frac{m_k}{\sqrt{v_k}+\delta}.
\label{eq:simplified-update}
\end{align}
Throughout the analysis, we only use the uniform bound $|\alpha_k|\le C_\alpha:=2\alpha_{\mathrm{base}}$. For notational convenience, we define the diagonal preconditioning matrix:
\begin{equation}
D_k:=\operatorname{diag}\!\left(\frac{1}{\sqrt{v_k}+\delta}\right),
\qquad
\theta_{k+1}-\theta_k=-\eta D_k m_k.
\end{equation}

For the analysis, we adopt several standard, relatively mild assumptions commonly used in the literature on optimization methods in Deep Learning \cite{defossez2022a,xie2024adan,an2025asgo,guo2021novel,kwon2021asam}. These assumptions provide a conventional foundation for establishing the convergence results that follow.
\begin{assumption}[$L$-smoothness]
\label{assump:smooth}
The objective function $F$ is continuously differentiable and has an $L$-Lipschitz continuous gradient, satisfying:
\begin{equation}
\|\nabla F(x)-\nabla F(y)\| \le L\|x-y\|,
\qquad \forall x,y \in \mathbb{R}^d.
\end{equation}
\end{assumption}

\begin{assumption}[Stochastic Oracle Properties]
\label{assump:oracle}
The stochastic gradient estimators $\{g_k\}$ satisfy the following conditions for all $k$:
\begin{enumerate}
    \item Unbiasedness and bounded variance:
    \begin{align}
        \mathbb{E}[g_k \mid \theta_k] = \nabla F(\theta_k), \qquad \mathbb{E}\big[\|g_k-\nabla F(\theta_k)\|^2 \mid \theta_k\big] \le \sigma^2.
    \end{align}
    \item Uniformly bounded gradients: There exists a constant $G>0$ such that
    \begin{equation}
        \|g_k\|_\infty \le \|g_k\| \le G.
    \end{equation}
\end{enumerate}
\end{assumption}

The following elementary estimates will be used for the convergence analysis.
\begin{lemma}[Properties of modified gradients] 
\label{lem:modified-grad-props}
Under Assumptions~\ref{assump:smooth} and~\ref{assump:oracle}, for all $k\geq 0$, the modified stochastic gradient $\tilde g_k$ and the modified full gradient 
$\tilde \nabla_k := \nabla F(\theta_k) + \alpha_k\big(\nabla F(\theta_k)-\nabla F(\theta_{k-1})\big)$ satisfy:
\begin{align}
\|\tilde g_k\|_\infty \le \|\tilde g_k\| &\le (1+2C_\alpha)G, \\
\|\tilde \nabla_k - \nabla F(\theta_k)\| &\le C_\alpha L \|\theta_k-\theta_{k-1}\|.
\end{align}
\end{lemma}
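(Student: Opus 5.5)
Both bounds follow directly from the definitions, the triangle inequality, the uniform bound $|\alpha_k|\le C_\alpha$, and Assumptions~\ref{assump:smooth} and~\ref{assump:oracle}. The plan is to treat the two inequalities separately, after fixing a convention for the initial index.

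For the first inequality, note that $\|x\|_\infty\le\|x\|$ holds for every vector, so only the Euclidean bound needs work. Write $\tilde g_k=(1+\alpha_k)g_k-\alpha_k g_{k-1}$, or equivalently apply the triangle inequality to $g_k+\alpha_k(g_k-g_{k-1})$. This gives
\begin{equation*}
\|\tilde g_k\|\le \|g_k\|+|\alpha_k|\bigl(\|g_k\|+\|g_{k-1}\|\bigr)\le G+C_\alpha\cdot 2G=(1+2C_\alpha)G,
\end{equation*}
using the uniform gradient bound $\|g_j\|\le G$ from Assumption~\ref{assump:oracle} for both $j=k$ and $j=k-1$.

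For the second inequality, the term $\nabla F(\theta_k)$ cancels, leaving $\tilde\nabla_k-\nabla F(\theta_k)=\alpha_k(\nabla F(\theta_k)-\nabla F(\theta_{k-1}))$. Taking norms and applying Assumption~\ref{assump:smooth} gives
\begin{equation*}
\|\tilde\nabla_k-\nabla F(\theta_k)\|\le |\alpha_k|\,L\|\theta_k-\theta_{k-1}\|\le C_\alpha L\|\theta_k-\theta_{k-1}\|.
\end{equation*}

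The only point needing care is the boundary case $k=0$, where $g_{-1}$ and $\theta_{-1}$ appear. I would adopt the algorithm's convention that $\alpha_0=0$ (equivalently $s_0=0$, i.e.\ $\theta_{-1}=\theta_0$). Under this convention $\tilde g_0=g_0$ and $\tilde\nabla_0=\nabla F(\theta_0)$, so both bounds hold trivially. Alternatively, with $g_{-1}=0$ the same triangle-inequality estimate still goes through, since $\|g_{-1}\|=0\le G$. Apart from this bookkeeping there is no real obstacle.
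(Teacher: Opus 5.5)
Your proposal is correct and follows the paper's proof essentially line for line. Both use the triangle inequality together with $|\alpha_k|\le C_\alpha$ and $\|g_j\|\le G$ for the first bound, and cancellation of $\nabla F(\theta_k)$ plus $L$-smoothness for the second. Your explicit treatment of the $k=0$ case, via $\alpha_0=0$ and $\theta_{-1}=\theta_0$, is a small addition; the paper only states this convention later, in the proof of Theorem~\ref{thm:main}.
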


\begin{proof}
By definition, we have $\tilde g_k = g_k + \alpha_k(g_k-g_{k-1})$. Taking the norm and applying the triangle inequality yields:
\begin{equation*}
\|\tilde g_k\|
\le
\|g_k\| + |\alpha_k|\big(\|g_k\| + \|g_{k-1}\|\big)
\le
(1+2C_\alpha)G,
\end{equation*}
where we have used the uniform bound $|\alpha_k| \le C_\alpha$ and the assumption that $\|g_k\| \le G$.

For the second claim, by the definition of $\tilde\nabla_k$ and Assumption~\ref{assump:smooth}, we have:
\begin{equation*}
\|\tilde \nabla_k - \nabla F(\theta_k)\|
=
|\alpha_k|\,\|\nabla F(\theta_k)-\nabla F(\theta_{k-1})\|
\le
C_\alpha L\|\theta_k-\theta_{k-1}\|.
\end{equation*}
This completes the proof.
\end{proof}

\begin{lemma}[Bounded displacement and preconditioner]
\label{lem:bounded-displacement}
Considering Assumptions~\ref{assump:smooth} and~\ref{assump:oracle}, for all $k\geq 0$, the diagonal preconditioning matrix $D_k$ satisfies:
\begin{equation}\label{eq:dia}
c_0 I \preceq D_k \preceq  \frac{1}{\delta} I, \qquad \text{where} \quad c_0 := \frac{1}{(1+2C_\alpha)G+\delta},
\end{equation}
and the iterate displacement is bounded by:
\begin{equation}
\|\theta_{k+1}-\theta_k\| \le \eta M,
\qquad \text{where} \quad
M := \frac{(1+2C_\alpha)G}{\delta}.
\end{equation}
\end{lemma}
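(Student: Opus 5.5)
The plan is to bound the second-moment vector $v_k$ coordinatewise, which gives the spectral bounds on the diagonal matrix $D_k$ directly. We then bound the first moment $m_k$ in norm, and combine the two to control the step $-\eta D_k m_k$.

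\emph{Bounds on $v_k$.} From the recursion $v_k=\beta_2 v_{k-1}+(1-\beta_2)\tilde g_k^2$ with $v_0=\mathbf{0}$, we get by unrolling
\begin{equation*}
v_k=(1-\beta_2)\sum_{j\le k}\beta_2^{k-j}\tilde g_j^2 ,
\end{equation*}
so every coordinate of $v_k$ is nonnegative. Lemma~\ref{lem:modified-grad-props} gives $\|\tilde g_j\|_\infty\le (1+2C_\alpha)G$, hence each coordinate satisfies
\begin{equation*}
0\le (v_k)_i\le (1-\beta_2)\sum_{j}\beta_2^{k-j}\,(1+2C_\alpha)^2G^2\le (1+2C_\alpha)^2G^2 ,
\end{equation*}
since the geometric weights sum to at most one. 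A short induction on $k$ would do equally well: if $(v_{k-1})_i\le B^2$ with $B=(1+2C_\alpha)G$, then $(v_k)_i\le \beta_2B^2+(1-\beta_2)B^2=B^2$.

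\emph{Bounds on $D_k$.} The entries of $D_k$ are $1/(\sqrt{(v_k)_i}+\delta)$. The lower bound $(v_k)_i\ge 0$ gives $D_k\preceq \delta^{-1}I$. The upper bound $\sqrt{(v_k)_i}\le (1+2C_\alpha)G$ gives $D_k\succeq c_0 I$ with $c_0$ as stated in \eqref{eq:dia}.

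\emph{Bound on the displacement.} The same convexity argument applies to $m_k=(1-\beta_1)\sum_{j}\beta_1^{k-j}\tilde g_j$, which is a sub-convex combination of the $\tilde g_j$. Using the Euclidean bound from Lemma~\ref{lem:modified-grad-props}, this gives $\|m_k\|\le (1+2C_\alpha)G$. Then
\begin{equation*}
\|\theta_{k+1}-\theta_k\|=\eta\|D_km_k\|\le \eta\,\|D_k\|_{2}\,\|m_k\|\le \eta\,\frac{(1+2C_\alpha)G}{\delta}=\eta M .
\end{equation*}

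No step here is a genuine obstacle. The only care needed is at the initial index, where $g_{-1}$ is defined as $\mathbf{0}$ and $\alpha$ vanishes at the first step. In that case $\tilde g=g$ still satisfies the same bound, so the induction or geometric-sum argument goes through uniformly for all $k$.
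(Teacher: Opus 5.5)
Your proof is correct and follows the same route as the paper. You bound $0\le (v_k)_i\le (1+2C_\alpha)^2G^2$ coordinatewise from the EMA structure and the $\ell_\infty$ bound on $\tilde g_k$ to get $c_0 I\preceq D_k\preceq \delta^{-1}I$, use the same averaging/induction argument to get $\|m_k\|\le(1+2C_\alpha)G$, and conclude with $\|D_k m_k\|\le\|D_k\|\,\|m_k\|$, exactly as the paper does (the paper indexes the zero initialization as $m_{-1}=0$ rather than $v_0=\mathbf{0}$, which changes nothing).
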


\begin{proof}
First, unrolling the momentum recursion $m_k = \beta_1 m_{k-1} + (1-\beta_1)\tilde g_k$ with the initialization $m_{-1}=0$, and applying Lemma~\ref{lem:modified-grad-props}, an induction yields $\|m_k\| \le (1+2C_\alpha)G$. 

Similarly, because the second moment accumulator $v_k$ tracks the exponential moving average of $\tilde g_k^2$, the infinity norm bound from Lemma~\ref{lem:modified-grad-props} ensures that $\|v_k\|_\infty \le (1+2C_\alpha)^2 G^2$. Taking the square root coordinate-wise gives $0 \le \sqrt{v_k} \le (1+2C_\alpha)G$. By the definition of the preconditioner $D_k = \operatorname{diag}(1/(\sqrt{v_k} + \delta))$, these coordinate-wise bounds directly yield the matrix inequalities in \eqref{eq:dia}.

Finally, taking the norm of the parameter update rule $\theta_{k+1}-\theta_k = -\eta D_k m_k$ and combining these intermediate bounds yields:
\begin{equation*}
\|\theta_{k+1}-\theta_k\|
=
\eta\|D_k m_k\|
\le
\eta\|D_k\|\,\|m_k\|
\le
\frac{\eta}{\delta}(1+2C_\alpha)G
=
\eta M.
\end{equation*}
This completes the proof.
\end{proof}

With these preliminary estimates in hand, we can now state the main convergence result.
\begin{theorem}[Convergence under Lipschitz gradient]
\label{thm:main}
Consider the simplified curvature-aware Adam iteration above. Suppose Assumptions~\ref{assump:smooth}--\ref{assump:oracle} hold, and assume that $F$ is lower bounded by $F_* := \inf_\theta F(\theta) > -\infty$. Let the stepsize satisfy $\eta \le \min(1,\,1/L)$. There exist positive constants $C_1, C_2, C_3$, depending only on $L, \delta, \beta_1, G, C_\alpha, d$, such that for any $T\ge 1$,
\begin{equation}
\frac{1}{T}\sum_{k=0}^{T-1}\mathbb{E}\|\nabla F(\theta_k)\|^2
\le
\frac{C_1\big(F(\theta_0)-F_*+G^2\big)}{\eta T}
+
C_2\eta
+
C_3\sigma^2.
\end{equation}
The last term depends strictly on the stochastic gradient variance, while the number of iterations $T$ can be chosen to reduce the first two terms. Specifically, choosing $\eta=\Theta(T^{-1/2})$ yields a bound of order $\mathcal O(T^{-1/2})+\mathcal O(\sigma^2)$. Thus, in the stochastic setting, the method converges to a neighborhood of stationarity whose radius is controlled by the variance level $\sigma^2$.

\end{theorem}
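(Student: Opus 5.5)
The plan is a standard descent-lemma argument on a Lyapunov function that absorbs the momentum. The point is to exploit the two-sided bounds $c_0 I\preceq D_k\preceq \delta^{-1}I$ from Lemma~\ref{lem:bounded-displacement} and to treat every deviation of $D_k m_k$ from $D_k\nabla F(\theta_k)$ as an error term of size $O(\eta)$ or $O(\sigma^2)$.

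\textbf{Descent step.} By Assumption~\ref{assump:smooth} and Lemma~\ref{lem:bounded-displacement},
\begin{equation*}
F(\theta_{k+1})\le F(\theta_k)-\eta\langle \nabla F(\theta_k), D_k m_k\rangle+\tfrac{L}{2}\eta^2M^2 .
\end{equation*}
I then split $m_k=\nabla F(\theta_k)+(m_k-\nabla F(\theta_k))$. The leading part gives $-\eta\,\nabla F(\theta_k)^\top D_k\nabla F(\theta_k)\le -\eta c_0\|\nabla F(\theta_k)\|^2$. The cross term is handled by Young's inequality with weight $c_0/2$:
\begin{equation*}
\eta|\langle \nabla F(\theta_k),D_k e_k\rangle|\le \tfrac{\eta c_0}{2}\|\nabla F(\theta_k)\|^2+\tfrac{\eta}{2c_0\delta^2}\|e_k\|^2,
\qquad e_k:=m_k-\nabla F(\theta_k).
\end{equation*}
Summing over $k$ and telescoping leaves
\begin{equation*}
\tfrac{c_0}{2}\sum_k\mathbb E\|\nabla F(\theta_k)\|^2\le \frac{F(\theta_0)-F_*}{\eta}+\tfrac{L}{2}\eta M^2T+\tfrac{1}{2c_0\delta^2}\sum_k\mathbb E\|e_k\|^2 .
\end{equation*}
This approach avoids the usual Adam difficulty that $D_k$ is correlated with $g_k$: all terms involving $D_k$ are bounded deterministically through Young's inequality, and no conditional expectation is taken through $D_k$.

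\textbf{Bounding the momentum error.} Unroll $e_k=\beta_1 e_{k-1}+\beta_1(\nabla F(\theta_{k-1})-\nabla F(\theta_k))+(1-\beta_1)(\tilde g_k-\nabla F(\theta_k))$. Write
\begin{equation*}
\tilde g_k-\nabla F(\theta_k)=(\tilde g_k-\tilde\nabla_k)+(\tilde\nabla_k-\nabla F(\theta_k)).
\end{equation*}
By Lemma~\ref{lem:modified-grad-props} and Lemma~\ref{lem:bounded-displacement}, the second piece is at most $C_\alpha L\eta M$, and $\|\nabla F(\theta_{k-1})-\nabla F(\theta_k)\|\le L\eta M$. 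The first piece equals $(1+\alpha_k)\xi_k-\alpha_k\xi_{k-1}$ with $\xi_k:=g_k-\nabla F(\theta_k)$, so its second moment is at most $2(1+C_\alpha)^2\sigma^2+2C_\alpha^2\sigma^2$.

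I would not try to exploit martingale cancellation here. The reason is that $\alpha_k$ depends on $g_k$ through $\kappa_k$, which breaks the unbiasedness of $\tilde g_k$. Instead I apply Jensen/convexity to the geometric unrolling,
\begin{equation*}
\|e_k\|^2\le \beta_1^{k+1}\|e_{-1}\|^2+\textstyle\sum_j(1-\beta_1)\beta_1^{k-j}\|u_j\|^2 ,
\end{equation*}
where $u_j$ collects the per-step drive divided by $(1-\beta_1)$. This yields $\sum_k\mathbb E\|e_k\|^2\le C(G^2+T\eta^2+T\sigma^2)$ with constants depending on $\beta_1,L,C_\alpha,\delta,G$. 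Here $\|e_{-1}\|=\|\nabla F(\theta_0)\|\le G$ produces the $G^2$ term in the numerator, and the $T\eta^2$ contribution is absorbed into $C_2\eta$ using $\eta\le1$.

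\textbf{Conclusion.} Dividing by $c_0T/2$ gives the stated bound: the $1/(\eta T)$ term with $F(\theta_0)-F_*+G^2$ (using $\eta\le1$ to put $G^2/T\le G^2/(\eta T)$), the $C_2\eta$ term, and $C_3\sigma^2$. Taking $\eta=\Theta(T^{-1/2})$ then gives the rate in the statement.

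The main obstacle is the momentum-error step. Specifically, I need to ensure that the data-dependent gate $\alpha_k$ and the correlated preconditioner do not force the variance term to scale worse than $\sigma^2$. I would handle this crudely, accepting an $O(\sigma^2)$ floor with constant $C_3\propto (1+C_\alpha)^2/(c_0^2\delta^2)$; this floor is exactly what the theorem permits.
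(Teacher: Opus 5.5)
Your proposal is correct and follows essentially the same route as the paper. Both arguments use the descent lemma, a Young inequality based on $c_0 I\preceq D_k\preceq \delta^{-1}I$, a crude convexity-based unrolling of the momentum tracking error with no martingale cancellation (giving the $O(\sigma^2)$ floor), and summation with $\|e_{-1}\|\le G$ and $\eta\le 1$. The paper measures the error against $\tilde\nabla_k$, splitting off $\Delta_k$ with a $D_k$-weighted Young inequality, whereas you measure it against $\nabla F(\theta_k)$ and let the $O(C_\alpha L\eta M)$ gate bias enter through the momentum drive. This only changes the constants.
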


\begin{proof}
By the $L$-smoothness of $F$, the update rule $\theta_{k+1}-\theta_k=-\eta D_k m_k$, and the intermediate bound $\|D_k m_k\| \le M$ established in Lemma~\ref{lem:bounded-displacement}, we have:
\begin{equation}
\mathbb{E}[F(\theta_{k+1})] \le \mathbb{E}[F(\theta_k)] - \eta\,\mathbb{E}\langle \nabla F(\theta_k),D_k m_k\rangle + \frac{L M^2}{2}\eta^2.
\label{eq:descent-start}
\end{equation}

We define the modified full gradient as $\tilde\nabla_k := \nabla F(\theta_k) + \alpha_k\bigl(\nabla F(\theta_k) - \nabla F(\theta_{k-1})\bigr)$. Let $\Delta_k := \alpha_k\bigl(\nabla F(\theta_k) - \nabla F(\theta_{k-1})\bigr)$ and $e_k := m_k - \tilde\nabla_k$. By expanding $m_k = \nabla F(\theta_k) + \Delta_k + e_k$ and applying the weighted Young's inequality ($\langle a, Db \rangle \le \frac{1}{4}\langle a, Da \rangle + \langle b, Db \rangle$) individually to both the momentum shift $\Delta_k$ and the tracking error $e_k$, together with the preconditioner bounds $c_0 I \preceq D_k \preceq \frac{1}{\delta} I$ in Lemma~\ref{lem:bounded-displacement}, we obtain:
\begin{align}\label{eq:inner_prod_bound}
-\langle \nabla F(\theta_k), D_k m_k \rangle \le -\frac{c_0}{2} \| \nabla F(\theta_k)\|^2 + \frac{1}{\delta}\|\Delta_k\|^2 + \frac{1}{\delta}\| e_k\|^2.
\end{align}

We now bound the expected squared tracking error $\mathbb{E}\|e_k\|^2$. From the momentum recursion, $e_k = \beta_1 e_{k-1} + \beta_1(\tilde\nabla_{k-1}-\tilde\nabla_k) + (1-\beta_1)(\tilde g_k-\tilde\nabla_k)$. Using the convexity of the squared norm and the standard inequality $\|a+b\|^2 \le 2\|a\|^2 + 2\|b\|^2$, we obtain:
\begin{align}
\mathbb{E}\|e_k\|^2 \le \beta_1 \mathbb{E}\|e_{k-1}\|^2 + \frac{2\beta_1^2}{1-\beta_1} \mathbb{E}\|\tilde\nabla_{k-1}-\tilde\nabla_k\|^2 + 2(1-\beta_1) \mathbb{E}\|\tilde g_k-\tilde\nabla_k\|^2.
\end{align}
We bound the last two terms using the $L$-smoothness of $F$, the displacement limit ($\|\theta_k - \theta_{k-1}\| \le \eta M$), and the stochastic variance from Assumption~\ref{assump:oracle}. Defining the constants $c_1 := 3(1 + 2C_\alpha^2) L^2 M^2$ and $c_2 := 3(1 + 2C_\alpha^2)$, we establish:
\begin{align}
\mathbb{E}\|\tilde\nabla_{k-1} - \tilde\nabla_k\|^2 \le c_1 \eta^2 \quad \text{and} \quad \mathbb{E}\|\tilde g_k - \tilde\nabla_k\|^2 \le c_2 \sigma^2.
\end{align}
Substituting these limits back into the recursion yields:
\begin{align}\label{eq:e-contraction}
\mathbb{E}\|e_k\|^2 \le \beta_1\mathbb{E}\|e_{k-1}\|^2 + \frac{2\beta_1^2 c_1}{1-\beta_1}\eta^2 + 2(1-\beta_1)c_2\sigma^2.
\end{align}
With the initialization $m_{-1}=0$, $\theta_{-1}=\theta_0$, and $\alpha_0=0$, we have $\tilde\nabla_0=\nabla F(\theta_0).$ Define the initial tracking error by $e_{-1}:=m_{-1}-\nabla F(\theta_0)=-\nabla F(\theta_0),$ so that $\|e_{-1}\|^2=\|\nabla F(\theta_0)\|^2\le G^2.$
Unrolling the contraction~\eqref{eq:e-contraction} from $k=0$ gives:
\begin{align}\label{eq:e-unrolled}
\mathbb{E}\|e_k\|^2 \le \beta_1^{k+1} G^2 + \frac{2\beta_1^2 c_1}{(1-\beta_1)^2}\eta^2 + 2c_2\sigma^2.
\end{align}

Finally, substituting the inner product bound~\eqref{eq:inner_prod_bound} into~\eqref{eq:descent-start}, and using the tracking-error bound~\eqref{eq:e-unrolled} together with the assumption $\eta\leq 1/L$ (which implies $\eta^3 \le \eta^2 / L$), we obtain:
\begin{align*}
\mathbb{E}[F(\theta_{k+1})]
&\le
\mathbb{E}[F(\theta_k)]
-
\frac{c_0\eta}{2}\mathbb{E}\|\nabla F(\theta_k)\|^2
+
\frac{L M^2}{2}\eta^2
+
\frac{(C_\alpha L M)^2}{\delta}\eta^3
+
\frac{\eta}{\delta}\mathbb{E}\|e_k\|^2 \\
&\le
\mathbb{E}[F(\theta_k)]
-
\frac{c_0\eta}{2}\mathbb{E}\|\nabla F(\theta_k)\|^2
+
\frac{G^2}{\delta}\eta\beta_1^{k+1}
+
c_3\eta^2
+
c_4\eta \sigma^2,
\end{align*}
where the aggregated step-wise constants are defined as:
\begin{align*}
c_3 := \left(\frac{1}{2}+\frac{C_\alpha^2}{\delta}+\frac{6\beta_1^2(1+2C_\alpha^2)}{\delta(1-\beta_1)^2}\right)LM^2, 
\qquad 
c_4 := \frac{6(1+2C_\alpha^2)}{\delta}.
\end{align*}

Summing this inequality over $k=0,\dots,T-1$ and using the global lower bound $F(\theta_T) \ge F_*$ yields:
\begin{equation}
\frac{c_0\eta}{2}\sum_{k=0}^{T-1}\mathbb{E}\|\nabla F(\theta_k)\|^2
\le
F(\theta_0)-F_*
+
\frac{\beta_1 G^2}{\delta(1-\beta_1)}\eta
+
c_3T\eta^2
+
c_4T\eta\sigma^2.
\end{equation}
Dividing both sides by $\frac{c_0\eta T}{2}$ gives
\begin{equation}
\frac{1}{T}\sum_{k=0}^{T-1}\mathbb{E}\|\nabla F(\theta_k)\|^2
\le
\frac{2\big(F(\theta_0)-F_*\big)}{c_0\eta T}
+
\frac{2\beta_1 G^2}{c_0\delta(1-\beta_1)T}
+
\frac{2c_3}{c_0}\eta
+
\frac{2c_4}{c_0}\sigma^2.
\end{equation}
Since $\eta\le 1$, we have $\frac{1}{T}\le \frac{1}{\eta T}$, and therefore
\begin{equation}
\frac{2\big(F(\theta_0)-F_*\big)}{c_0\eta T}
+
\frac{2\beta_1 G^2}{c_0\delta(1-\beta_1)T}
\le
\frac{C_1\big(F(\theta_0)-F_*+G^2\big)}{\eta T},
\end{equation}
for $C_1:=\max\!\left(\frac{2}{c_0},\;\frac{2\beta_1}{c_0\delta(1-\beta_1)}\right).$
Setting $C_2 := \frac{2c_3}{c_0},C_3 := \frac{2c_4}{c_0},$ we arrive at the inequality stated in the theorem. The remaining claims regarding the convergence rate follow immediately from this bound by the stated choices of $\eta$, $T$, and $\sigma^2$.
\end{proof}

We next make the curvature interpretation more explicit under a Lipschitz Hessian assumption. In this regime, the gradient-difference correction becomes a first-order surrogate for a Hessian-vector product, up to a second-order remainder, without changing the convergence guarantee above.

\begin{assumption}[Lipschitz Hessian]
\label{assump:hessian}
There exists $\rho>0$ such that
\begin{equation}
\|\nabla^2 F(x)-\nabla^2 F(y)\| \le \rho \|x-y\|,
\qquad \forall x,y \in \mathbb{R}^d.
\end{equation}
\end{assumption}

\begin{theorem}[Curvature interpretation under Lipschitz Hessian]
\label{thm:hessian}
Under the conditions of Theorem~\ref{thm:main}, suppose that Assumption~\ref{assump:hessian} holds. Then for each $k$,
\begin{equation}
\nabla F(\theta_k)-\nabla F(\theta_{k-1})
=
\left(
\int_0^1 \nabla^2 F\big(\theta_{k-1}+t(\theta_k-\theta_{k-1})\big)\,dt
\right)(\theta_k-\theta_{k-1}),
\end{equation}
and hence
\begin{equation}
\tilde \nabla_k
=
\nabla F(\theta_k)
+
\alpha_k
\left(
\int_0^1 \nabla^2 F\big(\theta_{k-1}+t(\theta_k-\theta_{k-1})\big)\,dt
\right)(\theta_k-\theta_{k-1}).
\end{equation}
Moreover,
\begin{equation}
\|\tilde \nabla_k - \nabla F(\theta_k) - \alpha_k \nabla^2 F(\theta_k)(\theta_k-\theta_{k-1})\|
\le
C_\alpha \rho \|\theta_k-\theta_{k-1}\|^2.
\end{equation}
Therefore, the correction term is a first-order surrogate of a Hessian-vector product, up to a second-order remainder, and the convergence guarantee in Theorem~\ref{thm:main} remains valid.
\end{theorem}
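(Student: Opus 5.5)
The argument is short and uses only the fundamental theorem of calculus, the Lipschitz-Hessian bound, and the fact that the convergence argument never used more than $L$-smoothness.

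\textbf{Integral identity.} Set $s:=\theta_k-\theta_{k-1}$ and $\phi(t):=\nabla F(\theta_{k-1}+ts)$ for $t\in[0,1]$. Under Assumption~\ref{assump:hessian}, $F$ is twice differentiable with continuous Hessian, so $\phi$ is $C^1$ with $\phi'(t)=\nabla^2F(\theta_{k-1}+ts)\,s$. Applying the fundamental theorem of calculus componentwise to $\phi$ gives the first identity. Substituting it into the definition of $\tilde\nabla_k$ from Lemma~\ref{lem:modified-grad-props} gives the second identity at once. For $k=0$ the convention $\theta_{-1}=\theta_0$ from the proof of Theorem~\ref{thm:main} gives $s=0$, so both sides vanish trivially.

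\textbf{Remainder bound.} The vector to be bounded is
\begin{equation*}
\alpha_k\int_0^1\bigl(\nabla^2F(\theta_{k-1}+ts)-\nabla^2F(\theta_k)\bigr)s\,dt .
\end{equation*}
\begin{itemize}
\item Move the norm inside the integral, and use $|\alpha_k|\le C_\alpha$ and the operator-norm inequality.
\item Assumption~\ref{assump:hessian} bounds the integrand by $\rho\|\theta_{k-1}+ts-\theta_k\|\,\|s\|=\rho(1-t)\|s\|^2$.
\item Integrating gives $\tfrac{\rho}{2}C_\alpha\|s\|^2$, which is at most the stated bound $C_\alpha\rho\|s\|^2$.
\end{itemize}
Combining this with Lemma~\ref{lem:bounded-displacement} also gives the explicit size $\tfrac12 C_\alpha\rho\,\eta^2M^2$, which justifies calling it a second-order remainder.

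\textbf{Convergence guarantee still holds.} Assumption~\ref{assump:hessian} is added on top of the hypotheses of Theorem~\ref{thm:main}; none of those hypotheses are removed. The proof of Theorem~\ref{thm:main} used only $L$-smoothness, the oracle assumptions, and the bound $|\alpha_k|\le C_\alpha$. All of these remain in force, so the bound of Theorem~\ref{thm:main} applies verbatim. The new identities are a reinterpretation of the same quantity $\tilde\nabla_k$, not a change to the algorithm.

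\textbf{Main obstacle.} There is little real difficulty; the only delicate point is regularity. Assumption~\ref{assump:hessian} should be read as implying that $F$ is $C^2$, so that the integral is well defined and the fundamental theorem of calculus applies to $\phi$.
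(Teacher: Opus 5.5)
Your proposal is correct and matches the paper's proof essentially step for step. Both apply the integral form of Taylor's theorem to $\nabla F$ and then bound the remainder by $C_\alpha\rho\int_0^1(1-t)\,dt\,\|s_k\|^2\le C_\alpha\rho\|s_k\|^2$. The one difference is the closing convergence claim: the paper says the $O(\eta^2)$ remainder can be absorbed into the higher-order terms of Theorem~\ref{thm:main}, while you note that the algorithm and every hypothesis of Theorem~\ref{thm:main} are unchanged, so its bound applies verbatim — the more direct justification.
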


\begin{proof}
Let $s_k := \theta_k-\theta_{k-1}$. By the integral form of Taylor's theorem applied to $\nabla F$,
\begin{equation}
\nabla F(\theta_k)-\nabla F(\theta_{k-1})
=
\left(
\int_0^1 \nabla^2 F(\theta_{k-1}+t s_k)\,dt
\right)s_k,
\end{equation}
which immediately gives the stated representation of $\tilde \nabla_k$.

Subtracting $\alpha_k \nabla^2 F(\theta_k)s_k$ and using Assumption~\ref{assump:hessian}, we obtain
\begin{align}
\|\tilde \nabla_k - \nabla F(\theta_k) - \alpha_k \nabla^2 F(\theta_k)s_k\|
&\le
|\alpha_k|
\int_0^1
\|\nabla^2 F(\theta_{k-1}+t s_k)-\nabla^2 F(\theta_k)\|\,dt\,\|s_k\| \\
&\le
C_\alpha \rho
\int_0^1
\|\theta_{k-1}+t s_k-\theta_k\|\,dt\,\|s_k\| \\
&=
C_\alpha \rho
\int_0^1
(1-t)\,dt\,\|s_k\|^2
\le
C_\alpha \rho \|s_k\|^2.
\end{align}
This proves the claimed approximation bound.

Finally, since $\|s_k\|=O(\eta)$, the additional term is of order $O(\eta^2)$ and can be absorbed into the higher-order terms in the proof of Theorem~\ref{thm:main} without affecting the convergence rate.
\end{proof}

\begin{remark}
Theorem~\ref{thm:hessian} justifies the term ``curvature-aware'': At the full gradient level, the correction
\begin{equation}
\alpha_k\big(\nabla F(\theta_k)-\nabla F(\theta_{k-1})\big)
\end{equation}
is exactly a Hessian-averaged vector product along the segment from $\theta_{k-1}$ to $\theta_k$, and therefore serves as a first-order surrogate for
\begin{equation}
\alpha_k \nabla^2 F(\theta_k)(\theta_k-\theta_{k-1}),
\end{equation}
up to a second-order remainder under Lipschitz Hessian continuity.
\end{remark}

\section{Numerical Experiments}
\label{Section: Experiment}
In this section, we evaluate the proposed CA framework on a diverse set of PDE benchmarks. We combine it with three widely used optimizers in deep learning and PINNs, namely AdamW~\cite{kingma2014adam,loshchilov2018decoupled}, SOAP~\cite{vyas2025soap}, and Muon~\cite{jordan2024muon}, yielding CA-AdamW, CA-SOAP, and CA-Muon, respectively.

For each problem setting, the testing set contains 90{,}000 randomly sampled points and is kept fixed across all experiments for fair comparison. Training samples are independently drawn from the corresponding domains, where $N_f$, $N_b$, and $N_0$ denote the numbers of residual, boundary, and initial points, respectively, when applicable. For each PDE, all models are trained and evaluated with five random seeds. To isolate the effect of the CA modification, we use the same learning-rate schedule and weight decay for the standard and CA variants within each optimizer family. Performance is measured by the relative $L^2$ error and the $L^\infty$ norm:
\begin{align*}
\text{Relative } L^2 \text{ error}
&=
\frac{\sqrt{\sum_{k=1}^{N}\left|\hat{u}(\mathbf{x}_k,t_k)-u(\mathbf{x}_k,t_k)\right|^2}}
{\sqrt{\sum_{k=1}^{N}\left|u(\mathbf{x}_k,t_k)\right|^2}},\\
L^{\infty} \text{ norm}
&=
\max_{1 \leq k \leq N}
\left|\hat{u}(\mathbf{x}_k,t_k)-u(\mathbf{x}_k,t_k)\right|.
\end{align*}
Here, $u$ denotes the exact solution, $\hat{u}$ the model prediction, and $N$ the number of testing samples. We report the best result over five runs in the figures and the average over five runs in the tables.

\subsection{10D Heat Equation}
\label{Section: 10D Heat}
We consider the following high dimensional heat equation on the domain $\Omega = [-1,1]^{d}$ and $T = [0,1]$:
\begin{align}
    &u_t - \Delta u = f(\mathbf{x},t), \quad &&(\mathbf{x},t)\in\Omega\times T,\\
    &u(\mathbf{x},t) = g(\mathbf{x},t), \quad &&(\mathbf{x},t)\in\partial\Omega\times T,\\
    &u(\mathbf{x},0) = h(\mathbf{x}), \quad &&\mathbf{x}\in\Omega,
\end{align}
where $f(\mathbf{x},t)$, $g(\mathbf{x},t)$, and $h(\mathbf{x})$ can be inferred from the analytical solution $u(\mathbf{x},t) = \cos(\frac{1}{d}\sum_{i=1}^dx_i)e^{-t}$, where $\mathbf{x} = [x_1,x_2,\ldots,x_d]$, and we choose $d = 10$ in this example. Also, we will use a 3-layer network with 80 neurons in each hidden layer.

For training, we sample $N_f = 3{,}000$ collocation points in the domain and set $N_b = N_0 = 100$ for the boundary and initial data. The loss weights are chosen as $\lambda_F = 1$ and $\lambda_B = \lambda_I = 10$. The relative $L^2$ error histories are shown in Fig.~\ref{fig: Heat Hist}, and the final numerical results are summarized in Table~\ref{tab: Heat table}.

As shown in Fig.~\ref{fig: Heat Hist} and Table~\ref{tab: Heat table}, the CA modification improves final accuracy and often accelerates convergence across tested optimizers. Standard AdamW plateaus at a relative $L^2$ error of about $5.66 \times 10^{-3}$, whereas CA-AdamW reduces it to $1.34 \times 10^{-4}$, corresponding to a relative error ratio of $1.87\%$. This indicates that CA effectively alleviates optimization stagnation in the 10D setting.

Among all methods, CA-SOAP achieves the best accuracy, with the lowest relative $L^2$ error of $5.98 \times 10^{-6}$ and $L^\infty$ error of $1.21 \times 10^{-4}$. Although Muon and SOAP outperform AdamW in their standard forms, their CA variants further reduce the errors substantially. Overall, these results show that the CA modification is a robust enhancement for improving convergence and numerical accuracy on the heat equation.

\begin{figure}[!htb]
\centering
    \begin{minipage}{0.5\textwidth}
     \centering
     \includegraphics[width=\linewidth]{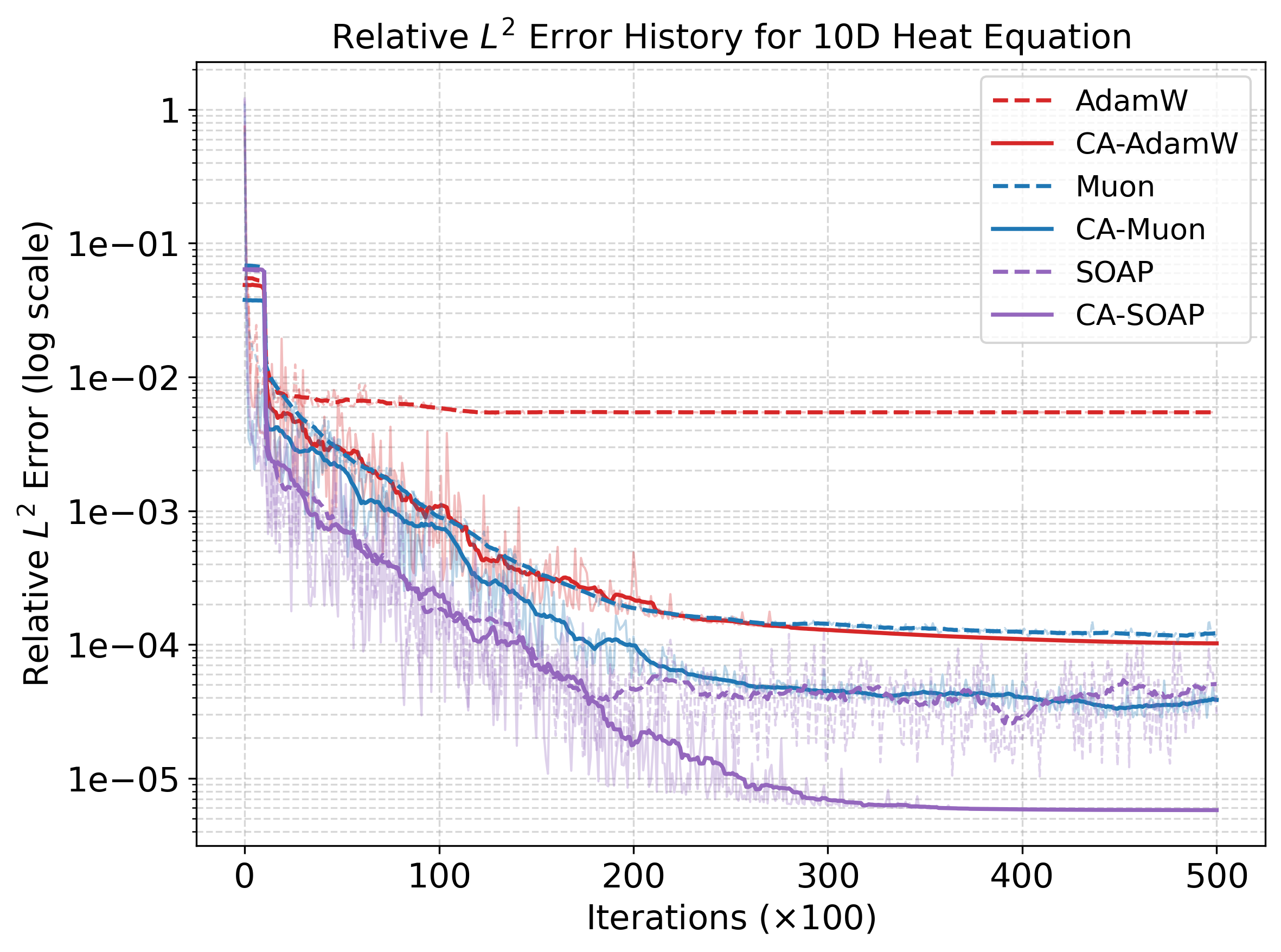} 
   \end{minipage}   
    \caption{History of the relative $L^2$ errors for the 10D Heat Equation.}\label{fig: Heat Hist}
\end{figure}

\begin{table}[!htb]
    \centering
    \caption{\textbf{Errors for the 10D heat equation (lower is better).} Average relative $L^2$ error and absolute $L^\infty$ error for each optimizer, comparing the standard version to its CA variant. \textbf{Error reduction (\%)} indicates the percentage decrease in error from the standard optimizer to its CA variant (\textbf{higher is better}). \gbf{Green} denotes the best results and performance improvements across all optimizers.}
    \label{tab: Heat table}
    \resizebox{0.6\textwidth}{!}{
        \begin{tabular}{lccc}
        \toprule
        \textbf{Optimizers}
        & \multicolumn{1}{c}{Rel. $L^2$ error ($u$)} & \multicolumn{1}{c}{$L^\infty$ error ($u$)} \\
        \hline

        \grow AdamW 
        & $5.66\times 10^{-3}$ & $6.71\times 10^{-2}$ \\
        
        \brow CA-AdamW
        & $1.34\times 10^{-4}$ & $1.07\times 10^{-3}$ \\
        
        \multicolumn{1}{l|}{\textbf{Error reduction (\%) $\uparrow$}}
            & \gbf{97.63\%} & \gbf{98.40\%} \\
        \hline
        
        \grow Muon
        & $1.41\times 10^{-4}$ & $7.94\times 10^{-3}$ \\
        
        \brow CA-Muon
        & $3.92\times 10^{-5}$ & $2.18\times 10^{-3}$ \\
        
        \multicolumn{1}{l|}{\textbf{Error reduction (\%) $\uparrow$}}
            & 72.20\% & 72.54\% \\
        \hline
        
        \grow SOAP
        & $5.84 \times 10^{-5}$ & $8.83\times 10^{-4}$ \\
        
        \brow CA-SOAP
        & \gbf{$\mathbf{5.98\times 10^{-6}}$} & \gbf{$\mathbf{1.21\times 10^{-4}}$} \\
        
        \multicolumn{1}{l|}{\textbf{Error reduction (\%) $\uparrow$}}
            & 89.76\% & 86.30\% \\
        
        \bottomrule
        \end{tabular}
    }
    \vspace{-0.5em}
\end{table}

\subsection{Gray-Scott Equation}
\label{Section: GS equation}
Reaction--diffusion systems generate diverse patterns observed in nature. The Gray--Scott system is a canonical model of such dynamics and a challenging benchmark for PINN training due to its stiff reaction--diffusion behavior, nonlinear coupling, and sensitivity to error accumulation over long time horizons. It is governed by the following equations:
\begin{align}
    &u_t - \epsilon_1 u_{xx} - b(1-u) + uv^2 = 0, \quad &&(x, t)\in\Omega\times T,\\
    &v_t - \epsilon_2 v_{xx} + (b + k)v + uv^2 = 0, \quad &&(x, t)\in\Omega\times T,\\
    &u(x,0) = u_0(x), \quad v(x,0) = v_0(x), \quad &&x\in\Omega,
\end{align}
with a periodic boundary condition. The spatial domain is defined as $\Omega = [-50,50]$ with a time horizon of $T = [0,20]$. The diffusion rates are set to $\epsilon_1 = 1$ and $\epsilon_2 = 0.01$, while the reaction coefficients are $b=0.02$ and $k=0.0562$. The initial conditions are chosen to be:
\begin{align}
    u_0(x) = 1 - \frac{\sin(\pi(x-50)/100)^4}{2},\quad v_0(x) = \frac{\sin(\pi(x-50)/100)^4}{4}.
\end{align}

We employ a fully connected neural network architecture with depth 3 and width 128. To strictly enforce the periodic boundary condition as a hard constraint, we utilize a Fourier feature embedding layer~\cite{dong2021method,anagnostopoulos2024residual} that maps the spatial input $x$ to 
\begin{align*}
    [\cos(\omega_x x), \sin(\omega_x x), \dots, \cos(m\omega_x x), \sin(m\omega_x x)],
\end{align*} 
where $\omega_x = 2\pi/100$ and $m=10$. This embedding ensures that the network output is inherently periodic with period $L=100$. The dataset consists of $N_f = 5{,}000$ collocation points and $N_0 = 100$ initial points. The loss weights are set to $\lambda_F = 1$ and $ \lambda_I = 100$.

The training configuration, including the learning rate schedule, and weight decay remains identical to the setup described in the previous section. However, due to the chaotic and stiff nature of the Gray-Scott system, we employ a time-marching strategy to enhance convergence~\cite{krishnapriyan2021characterizing, vyas2025soap}.

The time domain $T=[0,20]$ is uniformly divided into 10 windows, and the model is trained sequentially over them. For the first window, the initial condition is given by the exact problem setup; for each subsequent window, it is taken from the prediction at the end of the previous window. Although this incurs additional retraining cost, it simplifies learning by restricting the network to shorter temporal evolutions. Each window is trained for $20{,}000$ iterations.

Fig.~\ref{fig: GS Hist} shows the relative $L^2$ error history, and Table~\ref{tab: GS table} reports the global errors over the full spatiotemporal domain. The periodic spikes in Fig.~\ref{fig: GS Hist} are caused by the time-marching scheme: each new time window introduces a transient re-initialization difficulty before reconvergence. Therefore, the per-window error trajectories in the figure differ from the cumulative global errors in the table.

Table~\ref{tab: GS table} shows three main trends. First, the CA framework consistently improves all three optimizer families, indicating that its benefit is not optimizer-specific. Second, the gains are particularly large for AdamW: CA reduces the relative $L^2$ error of $u$ to $5.69\%$ of that of standard AdamW. Third, CA also improves strong baselines such as Muon and SOAP in both relative $L^2$ and $L^\infty$ errors, suggesting that its effect is complementary rather than redundant.

To complement the quantitative results, Fig.~\ref{fig: GS heatmap} presents spatiotemporal heatmaps for CA-SOAP, one of the best-performing models in Table~\ref{tab: GS table}. It accurately captures the evolution of both $u$ and $v$, with only minor local discrepancies in the error maps.

\begin{table}[htbp]
    \footnotesize
    \centering
    \caption{\textbf{Errors for the Gray-Scott System (lower is better).} Average relative $L^2$ error and absolute $L^\infty$ error of $u$ and $v$ for each optimizer, comparing the standard version to its CA variant.}
    \label{tab: GS table}
    
    \resizebox{0.85\textwidth}{!}{
        \begin{tabular}{lcccc}
        \toprule
        \textbf{Optimizers}
        & \multicolumn{2}{c}{Rel. $L^2$ error} 
        & \multicolumn{2}{c}{$L^\infty$ error} \\
        \cmidrule(lr){2-3} \cmidrule(lr){4-5}
        & $u$ & $v$ & $u$ & $v$ \\
        \hline

        \grow AdamW 
        & $1.44\times 10^{-1}$ & $3.35\times 10^{-1}$ & $2.99\times 10^{-1}$ & $2.08\times 10^{-1}$ \\
        
        \brow CA-AdamW
        & $8.20\times 10^{-3}$ & $4.85\times 10^{-2}$ & $3.48\times 10^{-2}$ & $5.56\times 10^{-2}$ \\
        
        \multicolumn{1}{l|}{\textbf{Error reduction (\%) $\uparrow$}}
            & \gbf{94.31\%} & 85.52\% & \gbf{88.36\%} & 73.26\% \\
        \hline
        
        \grow Muon
        & $3.99\times 10^{-4}$ & $3.82\times 10^{-3}$ & $1.31\times 10^{-3}$ & $2.42\times 10^{-3}$ \\
        
        \brow CA-Muon
        & \gbf{$\mathbf{5.66\times 10^{-5}}$} & $7.85\times 10^{-4}$ & \gbf{$\mathbf{3.50\times 10^{-4}}$} & \gbf{$\mathbf{7.81\times 10^{-4}}$} \\
        
        \multicolumn{1}{l|}{\textbf{Error reduction (\%) $\uparrow$}} 
            & 85.81\% & 79.45\% & 73.28\% & 67.72\% \\
        \hline
        
        \grow SOAP
         & $4.51\times 10^{-4}$ & $5.05\times 10^{-3}$ & $3.44\times 10^{-3}$ & $5.21\times 10^{-3}$ \\
        
        \brow CA-SOAP
        & $6.88\times 10^{-5}$ & \gbf{$\mathbf{6.57\times 10^{-4}}$} & $5.30\times 10^{-4}$ & $9.98\times 10^{-4}$ \\
        
        \multicolumn{1}{l|}{\textbf{Error reduction (\%) $\uparrow$}}
            & 84.75\% & \gbf{86.99\%} & 84.59\% & \gbf{80.84\%} \\
        
        \bottomrule
        \end{tabular}
    }
    \vspace{-0.5em}
\end{table}

\begin{figure}[!htb]
\centering
   \begin{minipage}{\textwidth}
     \centering
     \includegraphics[width=\linewidth]{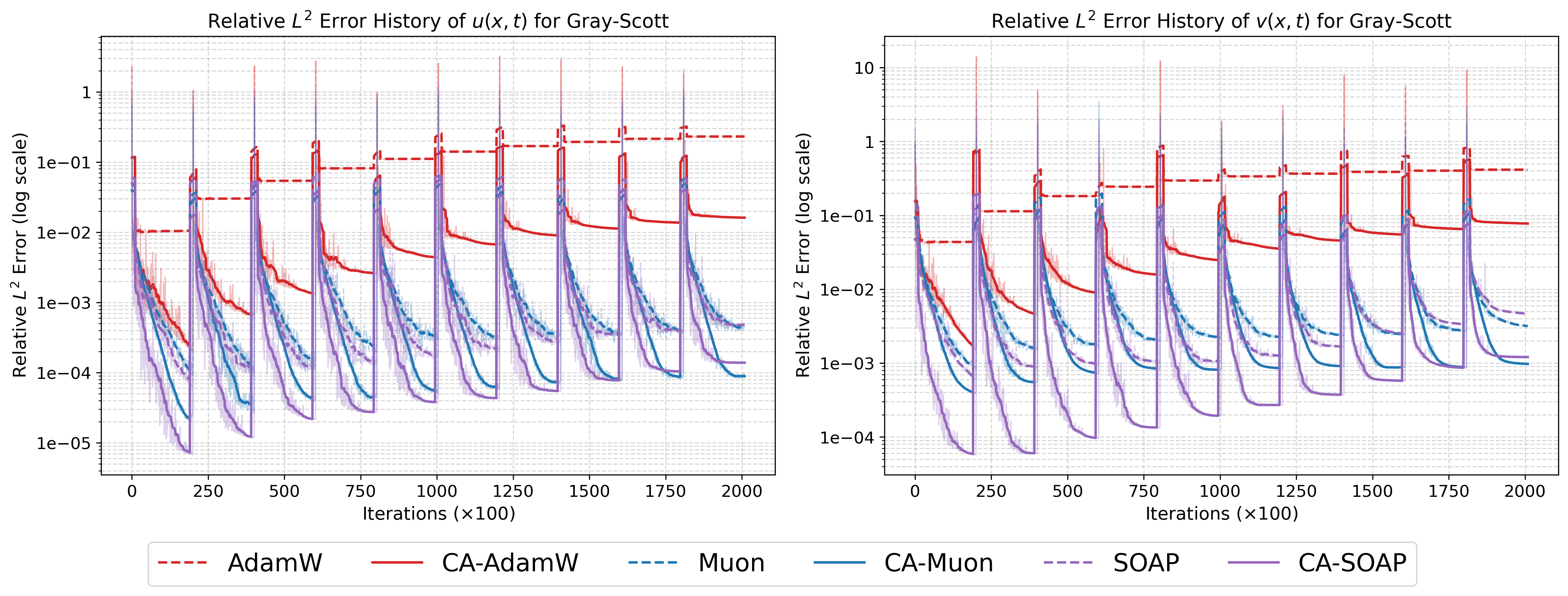} 
   \end{minipage}   
    \caption{History of the relative $L^2$ errors for the Gray-Scott system. Because training is performed using a time-marching strategy, the error curves exhibit periodic spikes at the transition between consecutive time windows (every $20{,}000$ iterations), where the model must adapt to a newly initialized subproblem. Within each window, the error decreases as optimization progresses. 
    }\label{fig: GS Hist}
\end{figure}

\begin{figure}[!htb]
\centering
   \begin{minipage}{0.3\textwidth}
     \centering
     \includegraphics[width=\linewidth]{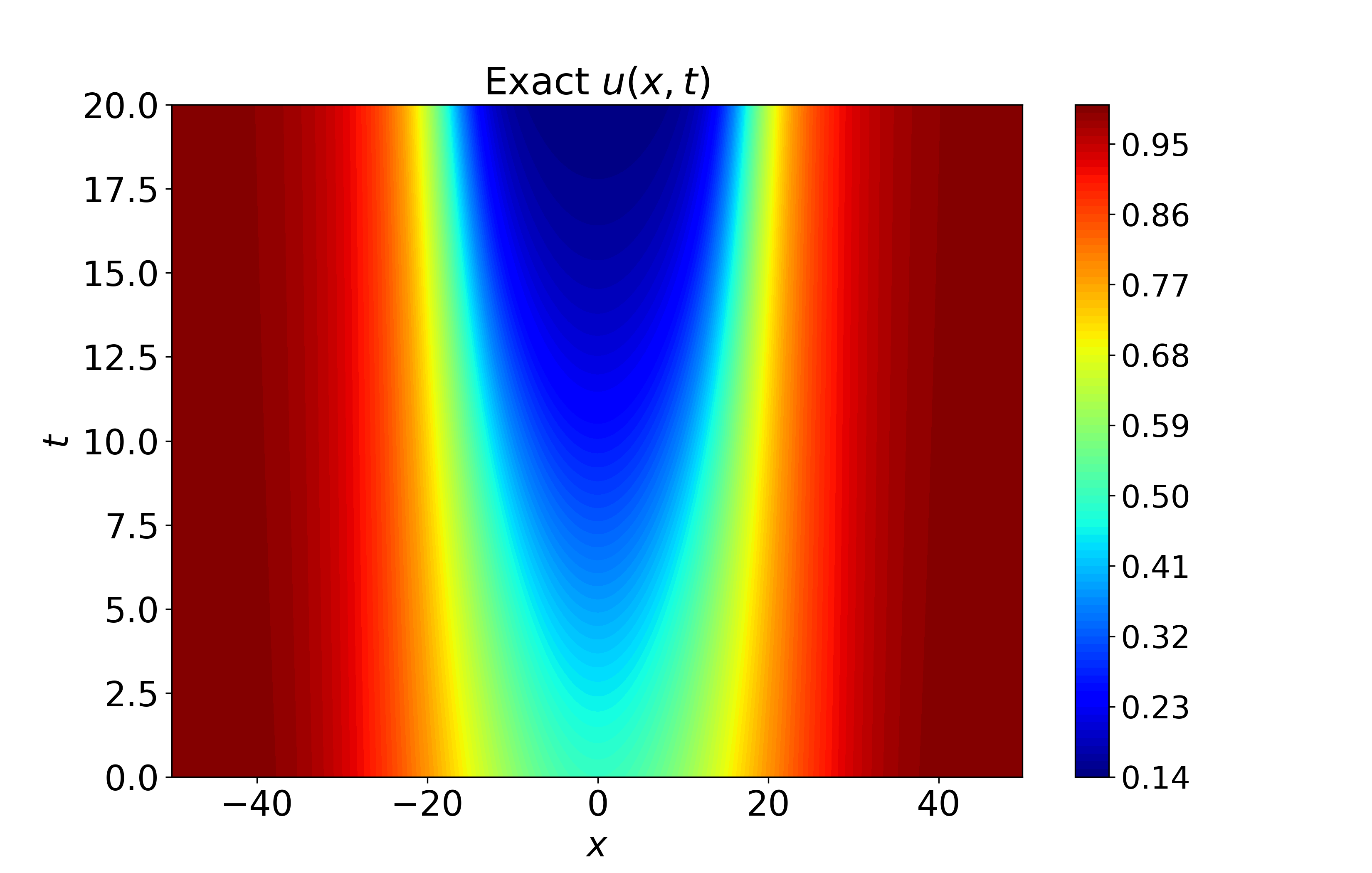} 
   \end{minipage}  
   \begin{minipage}{0.3\textwidth}
     \centering
     \includegraphics[width=\linewidth]{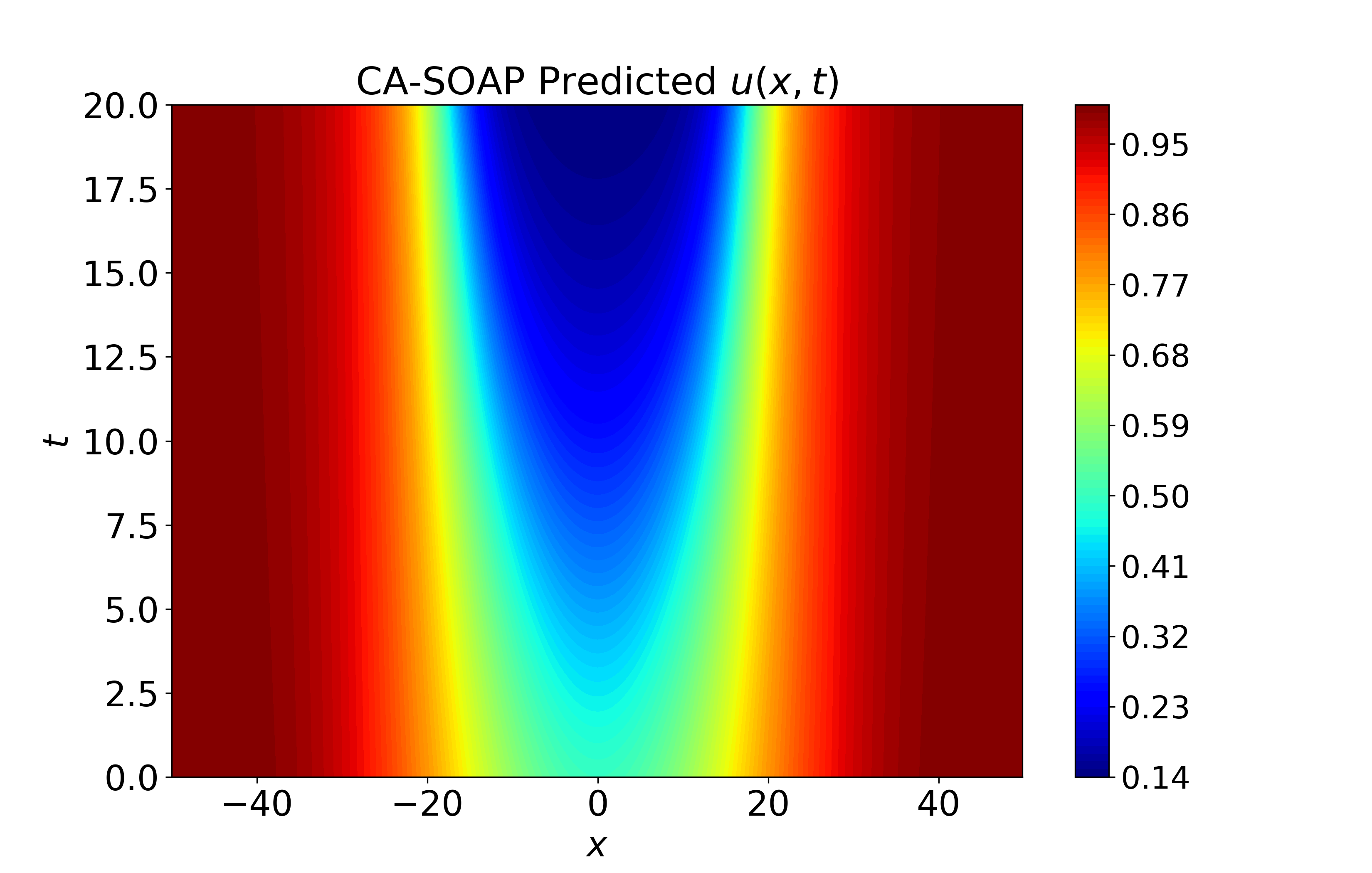} 
   \end{minipage} 
   \begin{minipage}{0.3\textwidth}
     \centering
     \includegraphics[width=\linewidth]{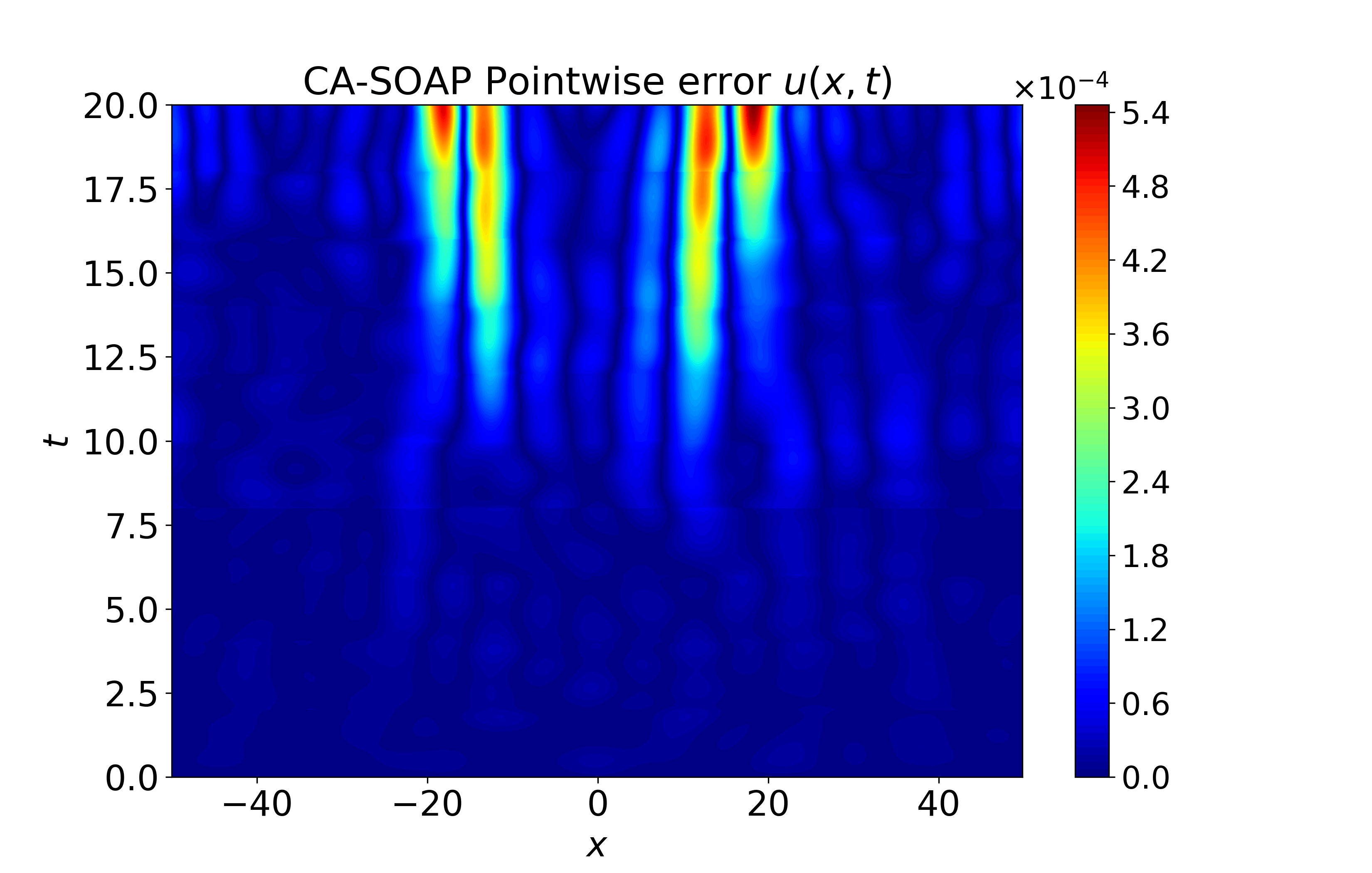} 
   \end{minipage} 

   \begin{minipage}{0.3\textwidth}
     \centering
     \includegraphics[width=\linewidth]{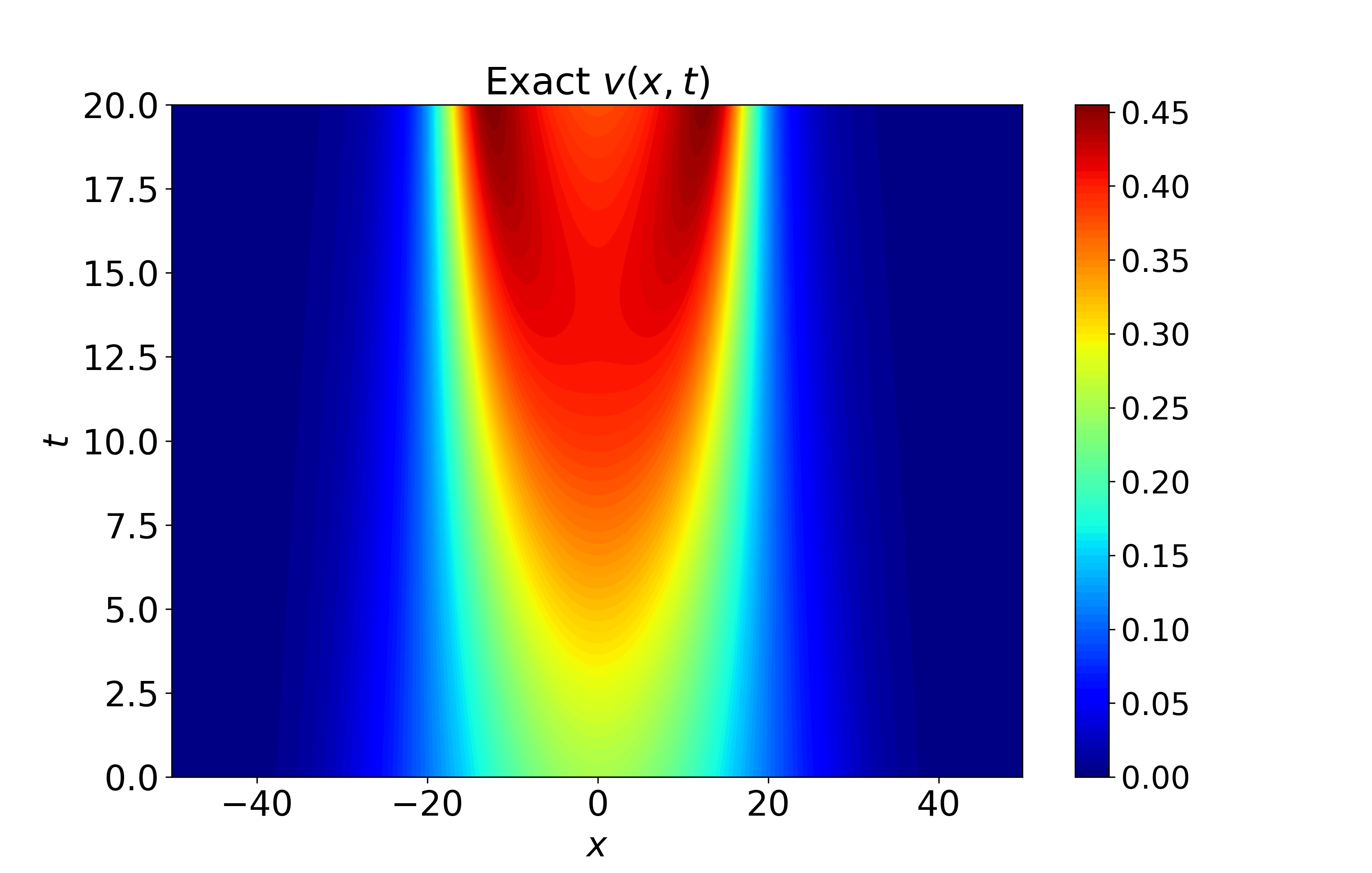} 
   \end{minipage}  
   \begin{minipage}{0.3\textwidth}
     \centering
     \includegraphics[width=\linewidth]{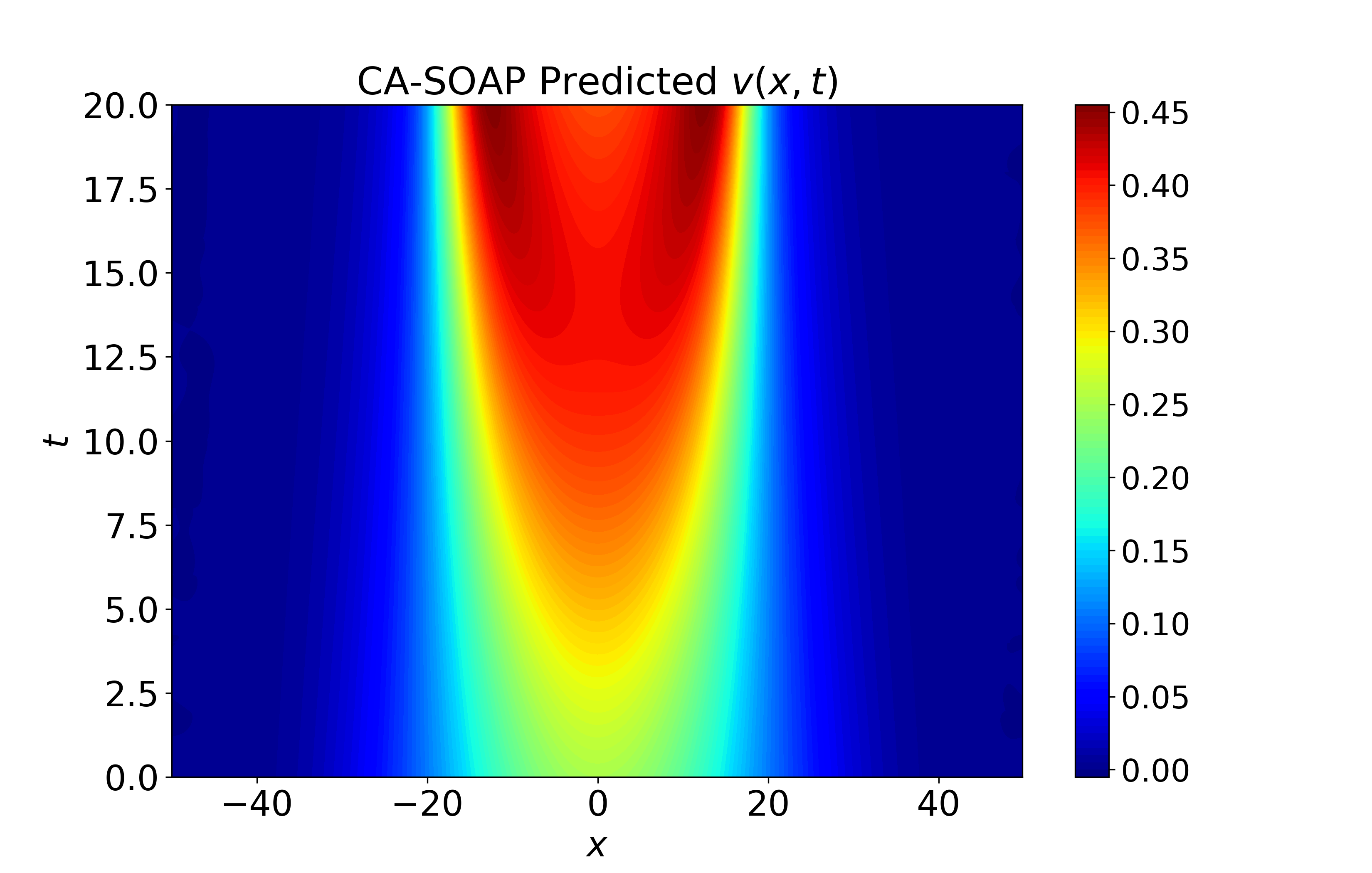} 
   \end{minipage} 
   \begin{minipage}{0.3\textwidth}
     \centering
     \includegraphics[width=\linewidth]{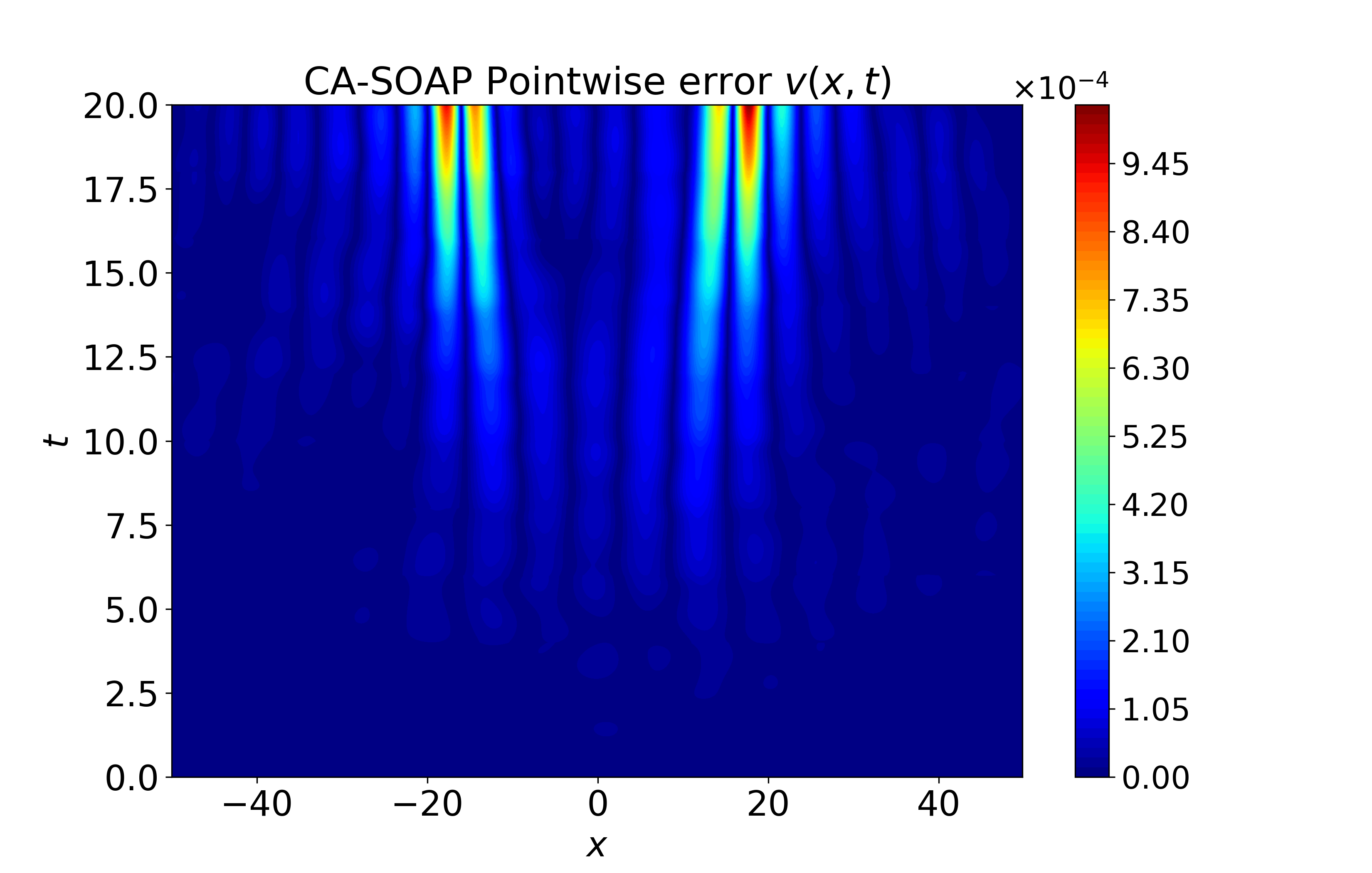} 
   \end{minipage} 
   
    \caption{Spatiotemporal heatmaps for the Gray-Scott system using the best-performing optimizer CA-SOAP. For each variable ($u$ and $v$), we show the reference solution, the prediction, and the corresponding point-wise error over the full spatiotemporal domain. The predicted fields closely match the reference patterns, with errors remaining small throughout the time horizon.}\label{fig: GS heatmap}
\end{figure}

\subsection{Belousov-Zhabotinsky Equation}
\label{Section: BZ equation}
The Belousov--Zhabotinsky (BZ) reaction is a classic nonequilibrium chemical system involving bromine-mediated oxidation in acidic solution. We consider the following coupled BZ system~\cite{rathore2024challenges}:
\begin{align}
    &u_t - \epsilon_1 u_{xx} + u + v - uv - u^2 = 0, \quad &&(x, t)\in\Omega\times T,\\
    &v_t - \epsilon_2 v_{xx} + w - v - uv = 0, \quad &&(x, t)\in\Omega\times T,\\
    &w_t - \epsilon_1 w_{xx} + u - w = 0, \quad &&(x, t)\in\Omega\times T,\\
    &u(x,0) = u_0(x), \quad v(x,0) = v_0(x), \quad w(x,0) = w_0(x), \quad &&x\in\Omega,
\end{align}
subject to periodic boundary conditions. The spatial domain is defined as $\Omega = [-1,1]$ with a time horizon of $T = [0,3]$. The diffusion rates are set to $\epsilon_1 = 10^{-5}$ and $\epsilon_2 = 2\times 10^{-5}$. The initial conditions are chosen to be:
\begin{align*}
    u_0(x) = \exp(-100(x+0.5)^2), v_0(x) = \exp(-100x^2), w_0(x) = \exp(-100(x-0.5)^2).
\end{align*}

We utilize a fully connected neural network architecture with depth 5 and width 128. Also, we employ $N_f = 8{,}000$ collocation points, and $N_0 = 800$ initial points for training, with the weight $\lambda_F = 1$ and $\lambda_I = 100$. Same as the Gray-Scott system in Section~\ref{Section: GS equation}, the periodic boundary condition is enforced as a hard constraint by the Fourier feature embedding by setting $L = 2$ and $m = 10$.

To address the stiff dynamics introduced by the small diffusion coefficients, we again apply the time-marching strategy by dividing the time domain $T = [0, 3]$ into 10 uniform windows. The model will be trained for $20{,}000$ iterations per window.

Fig. \ref{fig: BZ Hist} illustrates the history of the relative $L^2$ errors for the species $u$, $v$, and $w$. While the error curves display the expected re-initialization spikes characteristic of the time-marching scheme, the optimizers successfully drive the error down within each interval, maintaining stability throughout the simulation.

Table \ref{tab: BZ table} quantifies these robust improvements across the tested settings. The addition of the CA strategy consistently enhances prediction accuracy across all tested optimizers. For instance, CA-AdamW improves upon the standard AdamW results by nearly an order of magnitude. Among the tested methods, CA-SOAP achieves the best overall performance and attains the lowest error in most reported metrics, while CA-Muon is also highly competitive on selected quantities (reaching the order of $10^{-5}$). The CA strategy is highly effective at suppressing error accumulation, reducing the final error to a small fraction of the baseline. These results confirm that the CA mechanism serves as an effective stabilizing mechanism in these experiments, enabling every optimizer to handle the stiff BZ dynamics with superior fidelity. Moreover, Fig.~\ref{fig: BZ heatmap} presents the spatiotemporal heatmaps of the best-performing model, CA-SOAP, for $u$, $v$, and $w$. The predicted fields agree closely with the reference solutions, and the corresponding error maps remain small throughout the domain.

\begin{figure}[!htb]
\centering
   \begin{minipage}{0.3\textwidth}
     \centering
     \includegraphics[width=\linewidth]{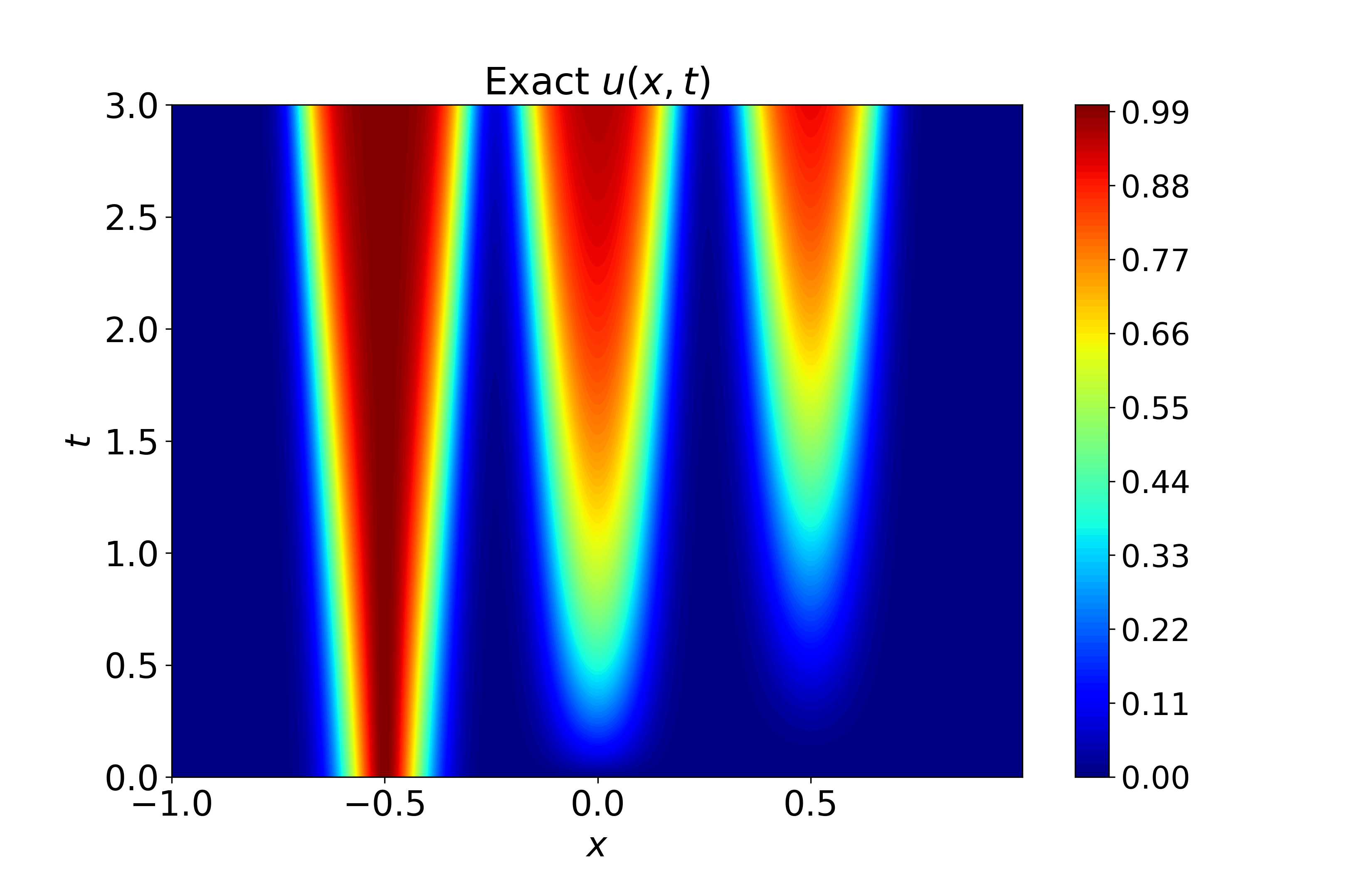} 
   \end{minipage}  
   \begin{minipage}{0.3\textwidth}
     \centering
     \includegraphics[width=\linewidth]{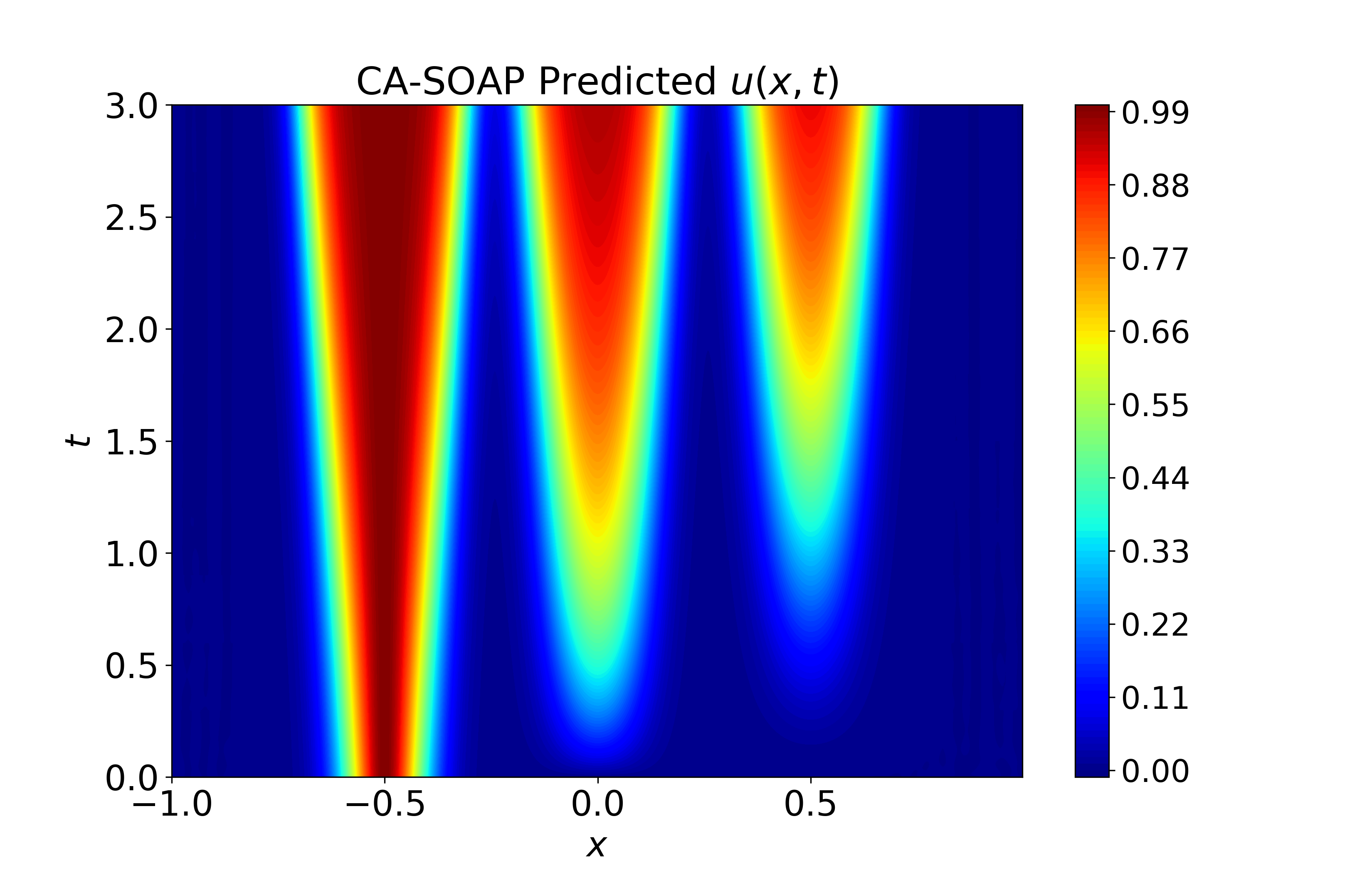} 
   \end{minipage} 
   \begin{minipage}{0.3\textwidth}
     \centering
     \includegraphics[width=\linewidth]{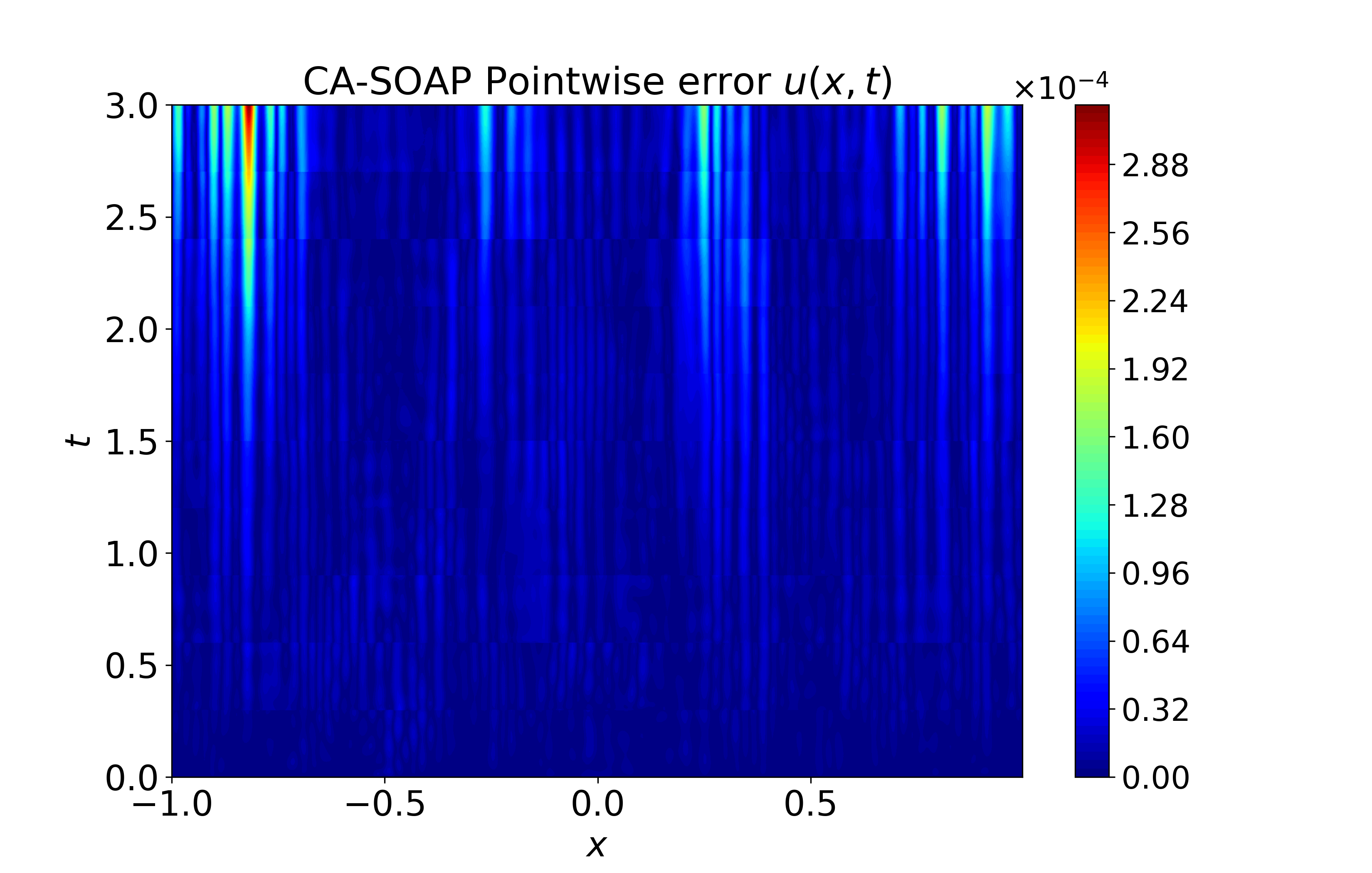} 
   \end{minipage} 

   \begin{minipage}{0.3\textwidth}
     \centering
     \includegraphics[width=\linewidth]{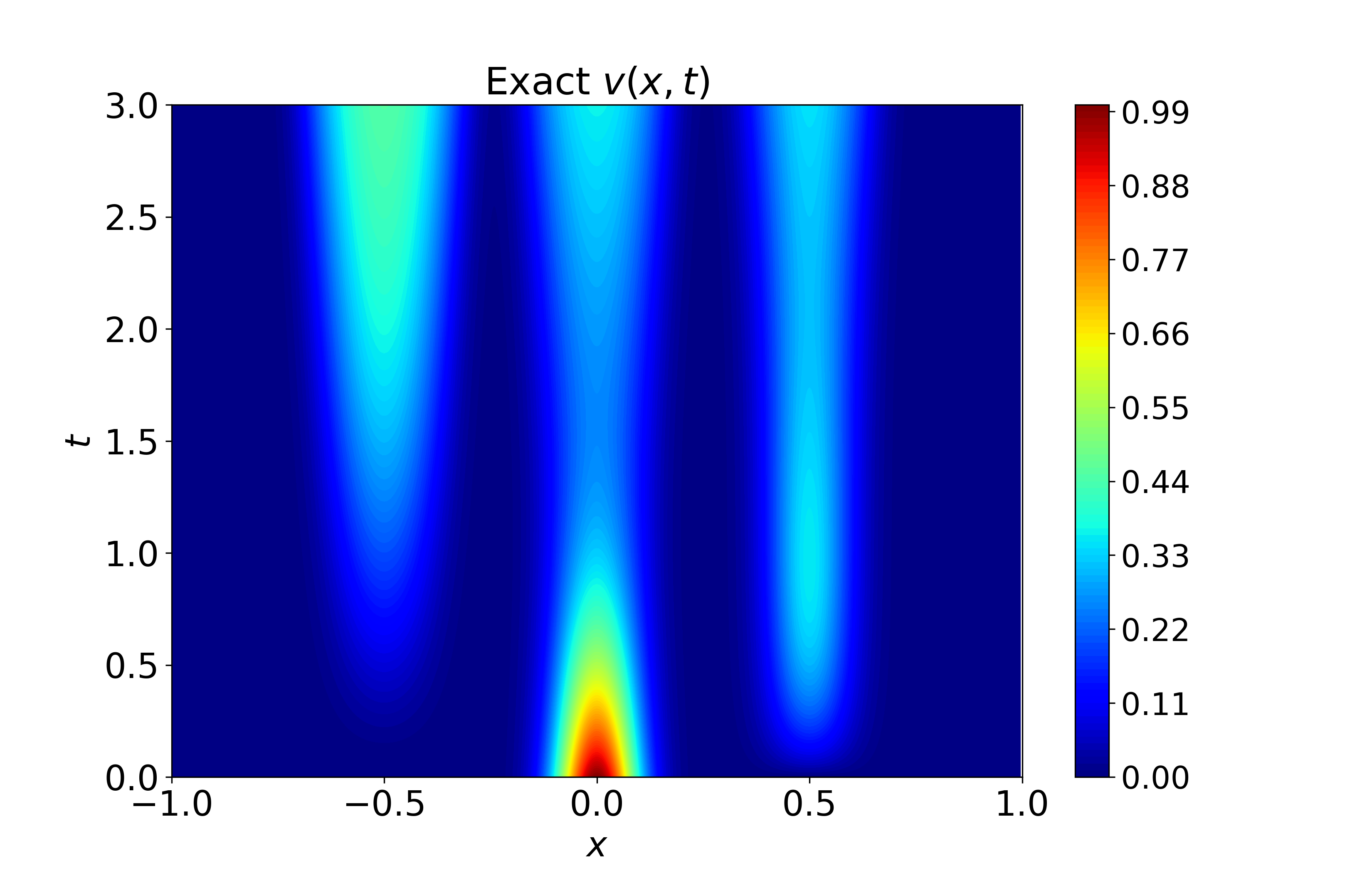} 
   \end{minipage}  
   \begin{minipage}{0.3\textwidth}
     \centering
     \includegraphics[width=\linewidth]{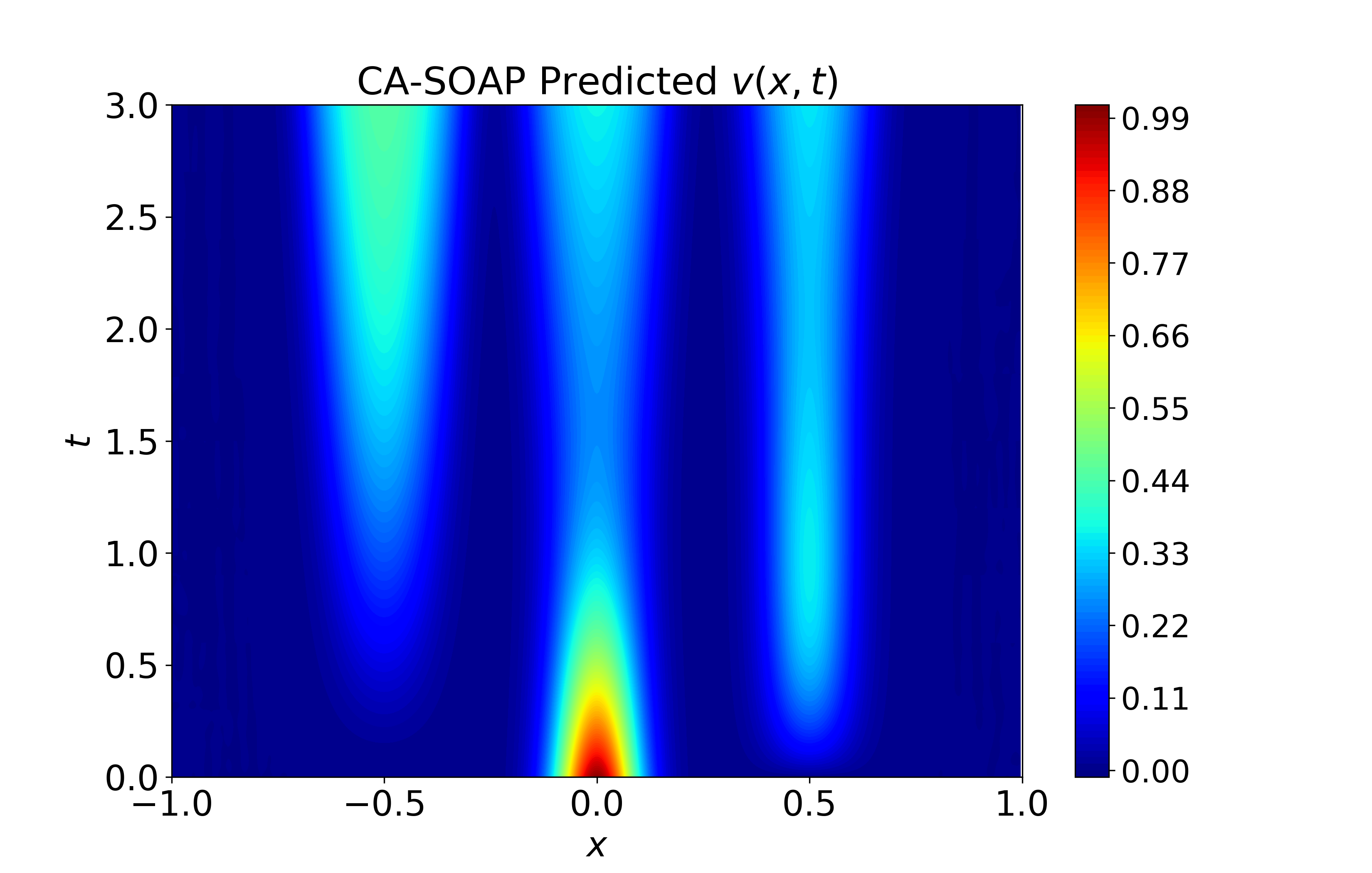} 
   \end{minipage} 
   \begin{minipage}{0.3\textwidth}
     \centering
     \includegraphics[width=\linewidth]{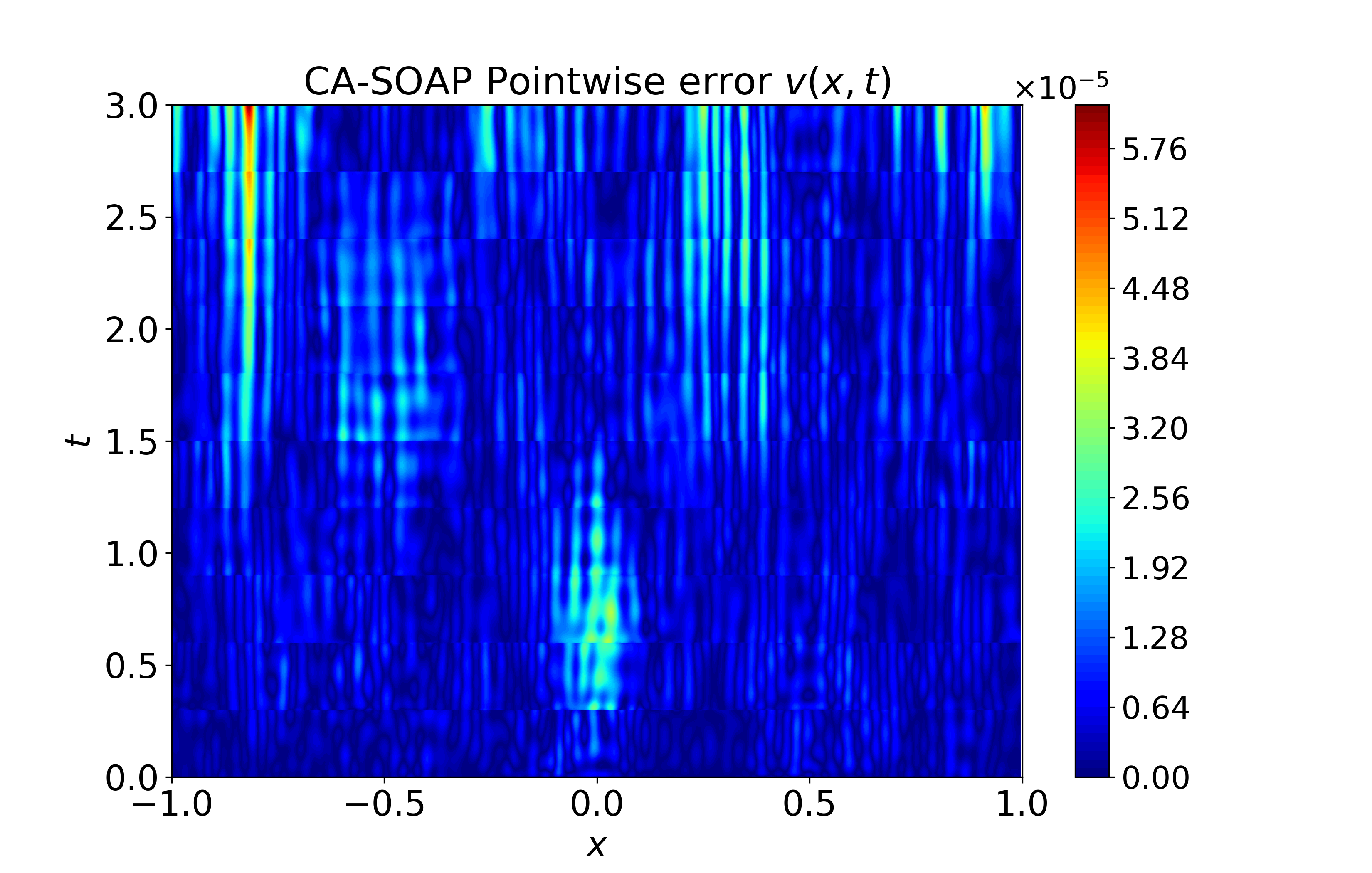} 
   \end{minipage} 

      \begin{minipage}{0.3\textwidth}
     \centering
     \includegraphics[width=\linewidth]{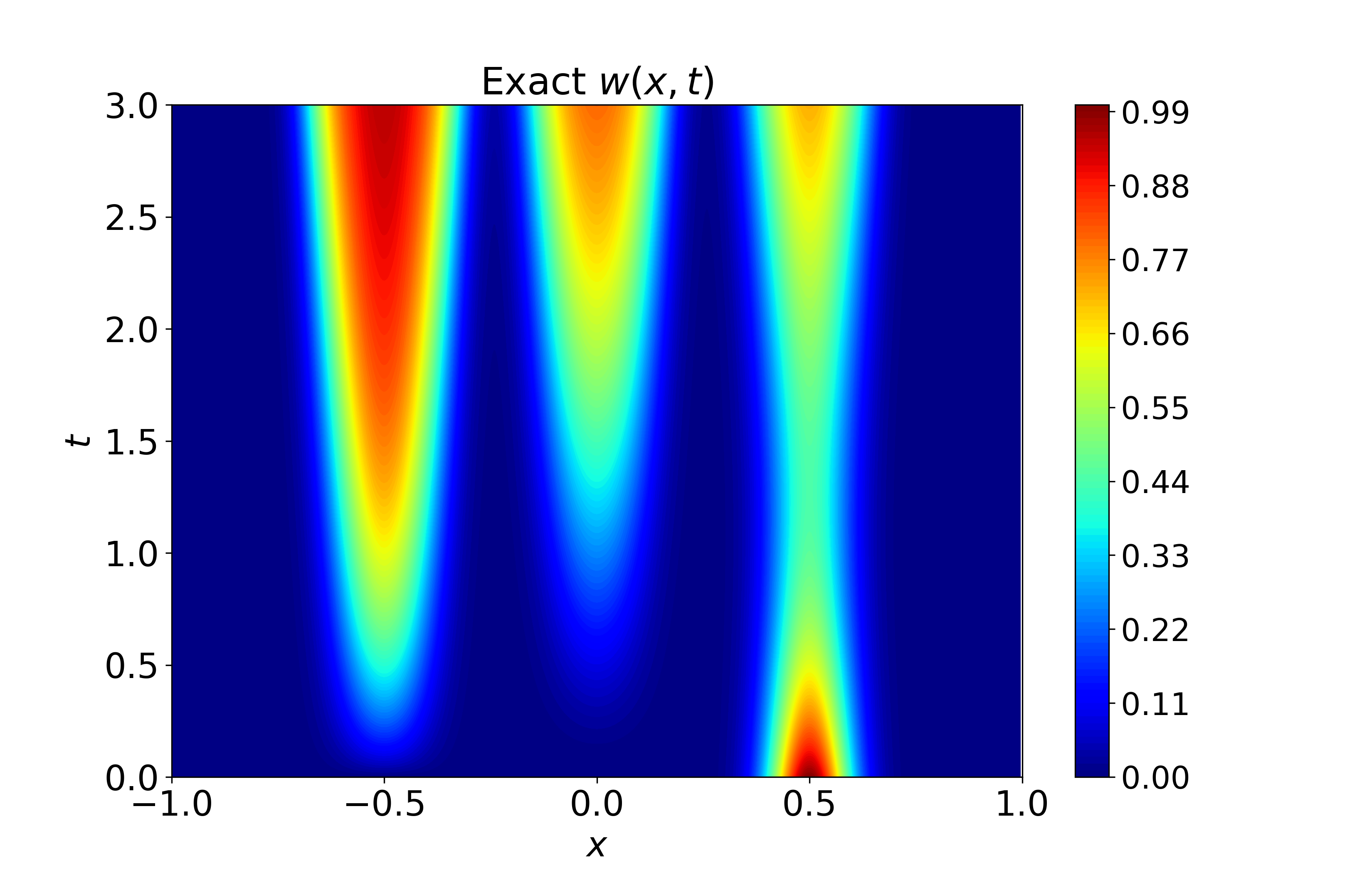} 
   \end{minipage}  
   \begin{minipage}{0.3\textwidth}
     \centering
     \includegraphics[width=\linewidth]{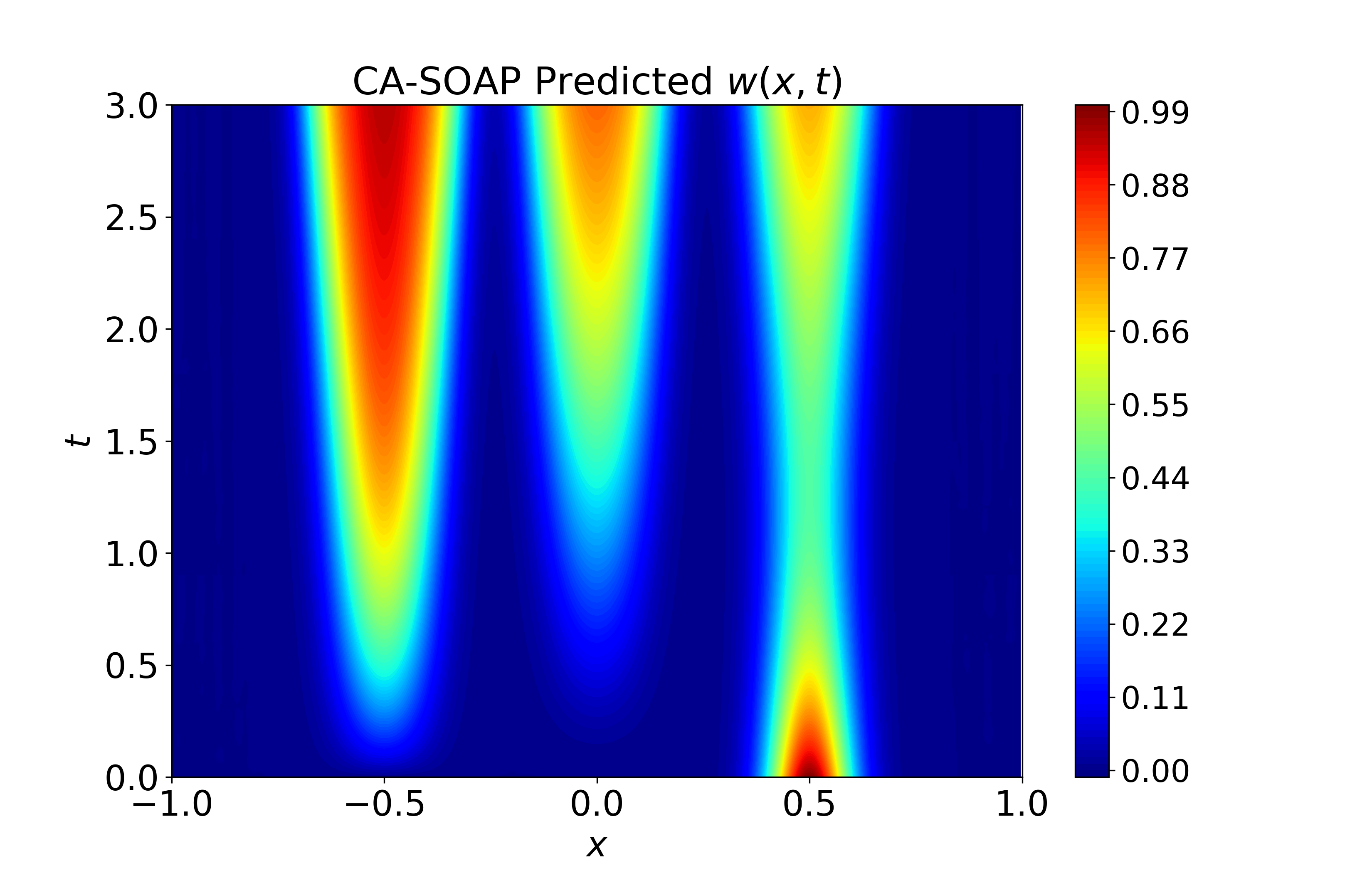} 
   \end{minipage} 
   \begin{minipage}{0.3\textwidth}
     \centering
     \includegraphics[width=\linewidth]{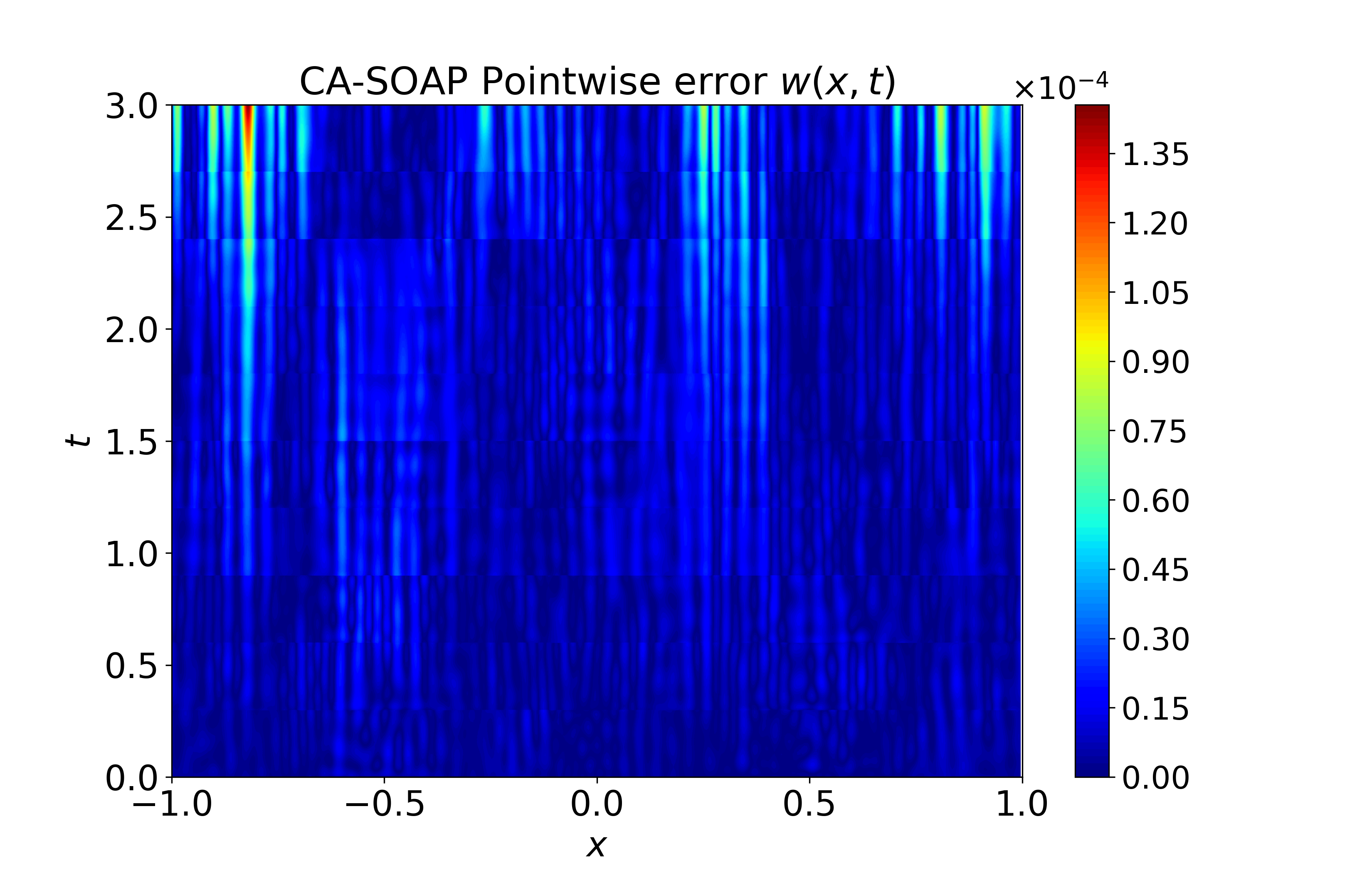} 
   \end{minipage} 
   
    \caption{Spatiotemporal heatmaps for the Belousov-Zhabotinsky system using CA-SOAP. For each species ($u$, $v$, and $w$), we show the reference solution, the prediction, and the corresponding point-wise error over the full spatiotemporal domain. The predicted fields accurately reproduce the coupled reaction-diffusion dynamics, with small errors throughout the domain.}\label{fig: BZ heatmap}
\end{figure}

\begin{figure}[!htb]
\centering
    \begin{minipage}{0.5\textwidth}
     \centering
     \includegraphics[width=\linewidth]{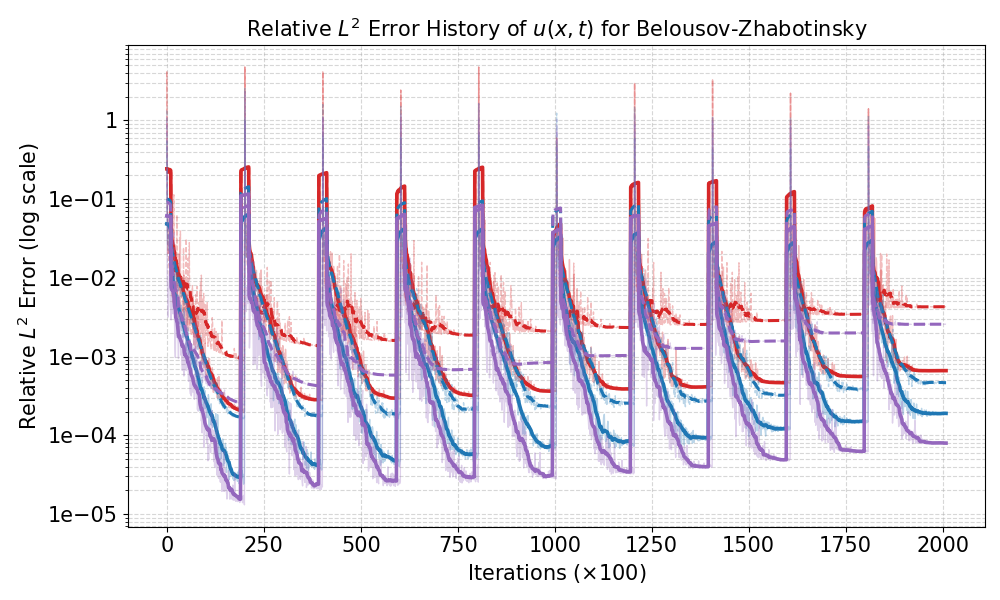} 
   \end{minipage}   
   
   \begin{minipage}{\textwidth}
     \centering
     \includegraphics[width=\linewidth]{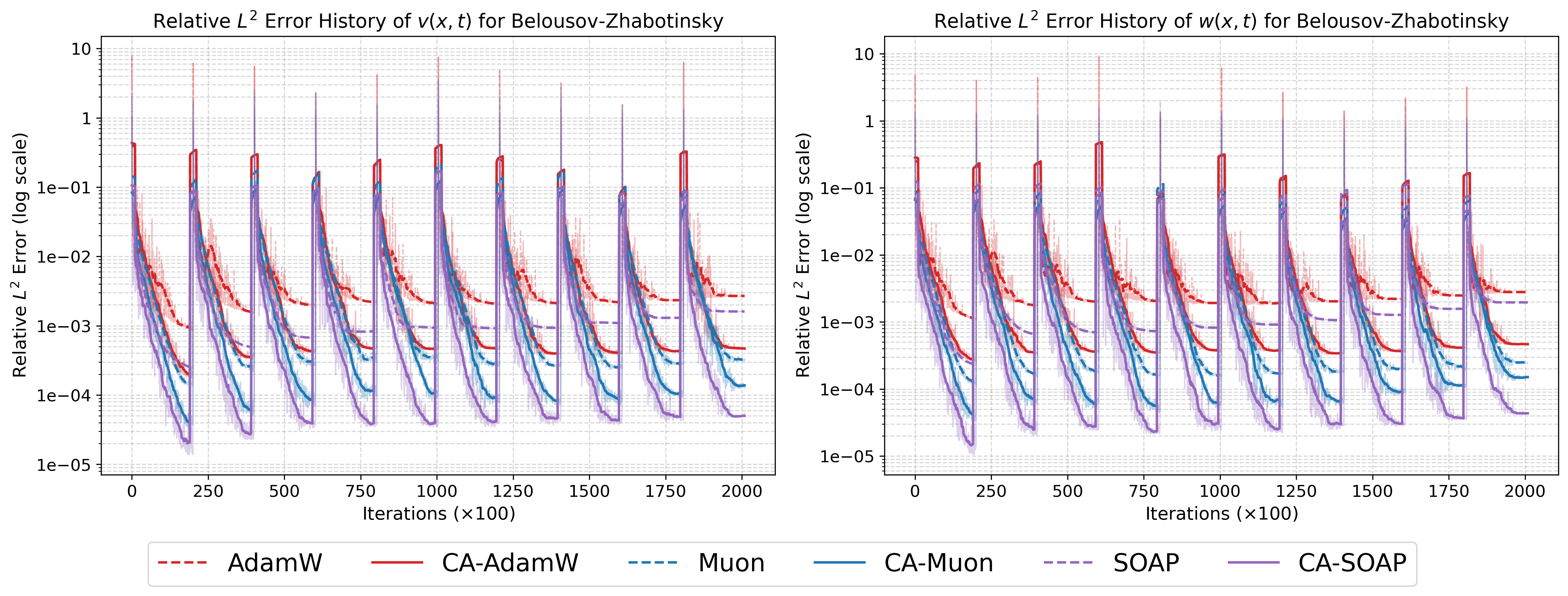} 
   \end{minipage}   
    \caption{History of the relative $L^2$ errors for the Belousov-Zhabotinsky system.}\label{fig: BZ Hist}
\end{figure}

\begin{table}[htbp]
    \centering
    \caption{\textbf{Errors for the Belousov-Zhabotinsky system (lower is better).} Average relative $L^2$ error and absolute $L^\infty$ error of $(u,~v,~w)$ for each optimizer, comparing the base method to its CA variant.}
    \label{tab: BZ table}
    
    \resizebox{\textwidth}{!}{
        \begin{tabular}{lcccccc}
        \toprule
        \textbf{Optimizers}
        & \multicolumn{3}{c}{Rel. $L^2$ error} 
        & \multicolumn{3}{c}{$L^\infty$ error} \\
        \cmidrule(lr){2-4} \cmidrule(lr){5-7}
        & $u$ & $v$ & $w$ & $u$ & $v$ & $w$ \\
        \hline

        \grow AdamW 
        & $3.08\times 10^{-3}$ & $2.11\times 10^{-3}$ & $2.30\times 10^{-3}$ 
        & $1.34\times 10^{-2}$ & $2.88\times 10^{-3}$ & $6.24\times 10^{-3}$ \\
        
        \brow CA-AdamW
        & $4.70\times 10^{-4}$ & $3.61\times 10^{-4}$ & $3.88\times 10^{-4}$ 
        & $1.73\times 10^{-3}$ & $3.89\times 10^{-4}$ & $8.19\times 10^{-4}$ \\
        
        \multicolumn{1}{l|}{\textbf{Error reduction (\%) $\uparrow$}}
        & 84.74\% & 82.89\% & 83.13\% & 87.09\% & 86.49\% & 86.88\% \\
    \hline
        
        \grow Muon
        & $3.31\times 10^{-4}$ & $2.58\times 10^{-4}$ & $2.51\times 10^{-4}$ 
        & $9.50\times 10^{-4}$ & $3.88\times 10^{-4}$ & $7.31\times 10^{-4}$  \\
        
        \brow CA-Muon
        & $8.96\times 10^{-5}$ & $6.41\times 10^{-5}$ & $6.70\times 10^{-5}$ 
        
        & $5.85\times 10^{-4}$ & $9.08\times 10^{-5}$ & \gbf{$\bf 1.22\times 10^{-4}$} \\
        
        \multicolumn{1}{l|}{\textbf{Error reduction (\%) $\uparrow$}}
            & 72.93\% & 75.16\% & 73.31\% & 38.42\% & 76.60\% & 83.31\% \\
        \hline
        
        \grow SOAP
        & $8.78\times 10^{-4}$ & $7.39\times 10^{-4}$ & $7.24\times 10^{-4}$ 
        & $5.33\times 10^{-3}$ & $9.69\times 10^{-4}$ & $1.97\times 10^{-3}$ \\
        
        \brow CA-SOAP
        & \gbf{$\bf 8.01\times 10^{-5}$} & \gbf{$\bf 4.66\times 10^{-5}$} & \gbf{$\bf 5.41\times 10^{-5}$} 
        
        & \gbf{$\bf 3.23\times 10^{-4}$} & \gbf{$\bf 6.19\times 10^{-5}$} & $1.50\times 10^{-4}$ \\
        
       \multicolumn{1}{l|}{\textbf{Error reduction (\%) $\uparrow$}}
            &\gbf{\bf 90.88\%} & \gbf{\bf 93.69\%} & \gbf{\bf 92.53\%} & \gbf{\bf 93.94\%} & \gbf{\bf 93.61\%} & \gbf{\bf 92.39\%} \\
        
        \bottomrule
        \end{tabular}
    }
    \vspace{-0.5em}
\end{table}

\subsection{2D Kuramoto-Sivashinsky System}
\label{Section: 2D KS system}
The Kuramoto-Sivashinsky (KS) equation combines nonlinear advection, fourth-order (bi-harmonic) dissipation, and an anti-diffusive term, and is well known for its chaotic spatiotemporal dynamics. It has been used extensively to model flame front and ion plasma instabilities. In this example, we consider the following 2D KS equation outlined in Ref.~\cite{rahman2025regularity}:
\begin{align*}
    \frac{\partial u}{\partial t} + \left( u \frac{\partial u}{\partial x} + v \frac{\partial u}{\partial y} \right) + \lambda \left( \frac{\partial^2 u}{\partial x^2} + \frac{\partial^2 u}{\partial y^2} \right) + \left( \frac{\partial^4 u}{\partial x^4} + 2\frac{\partial^4 u}{\partial x^2 \partial y^2} + \frac{\partial^4 u}{\partial y^4} \right) &= f_1(x,y,t), \\
    \frac{\partial v}{\partial t} + \left( u \frac{\partial v}{\partial x} + v \frac{\partial v}{\partial y} \right) + \lambda \left( \frac{\partial^2 v}{\partial x^2} + \frac{\partial^2 v}{\partial y^2} \right) + \left( \frac{\partial^4 v}{\partial x^4} + 2\frac{\partial^4 v}{\partial x^2 \partial y^2} + \frac{\partial^4 v}{\partial y^4} \right) &= f_2(x,y,t).
\end{align*}
An exact solution family parameterized by $\lambda = 0.01$ is:
\begin{align*}
    &u(x, y, t) = -\cos(\pi x)\sin(\pi y)\exp(-\frac{\pi^2\lambda t}{4}),\\
    &v(x, y, t) = \sin(\pi x)\cos(\pi y)\exp(-\frac{\pi^2\lambda t}{4}).
\end{align*}
The source functions $f_1(x,y,t)$ and $f_2(x,y,t)$ will be inferred by the exact solution above. Moreover, the spatial domain is $\Omega=[0,2]^2$ and time domain is $T = [0,1]$. The networks are constructed by $3$ hidden layers with $50$ neurons in each hidden layer. We take $N_f = 10{,}000$ and $N_b = N_0 = 200$ across all experiments, with $\lambda_F = 1$, and $\lambda_B = \lambda_I = 20$.

We run all optimizers for $20{,}000$ iterations, with the same learning rates and schedulers as in the previous experiments. The error history is shown in Fig.~\ref{fig: KS Hist}, and the numerical results are summarized in Table~\ref{tab: 2D KS table}.

Table~\ref{tab: 2D KS table} shows that the CA strategy remains consistently beneficial even for this challenging 2D high-order nonlinear system. Across all three optimizer families, the CA variants reduce both the relative $L^2$ and $L^\infty$ errors, indicating that the improvement is not tied to a specific optimizer design. In particular, CA-SOAP achieves the best overall performance, while CA-AdamW also yields substantial gains over its baseline. These results further support the generality of CA strategy as an effective enhancement for PINN training under complex spatiotemporal dynamics. Notably, the CA enhancement remains effective even when the baseline optimizer is already strong, as seen from the further improvement of SOAP on all reported metrics.

\begin{figure}[!htb]
\centering
   \begin{minipage}{\textwidth}
     \centering
     \includegraphics[width=\linewidth]{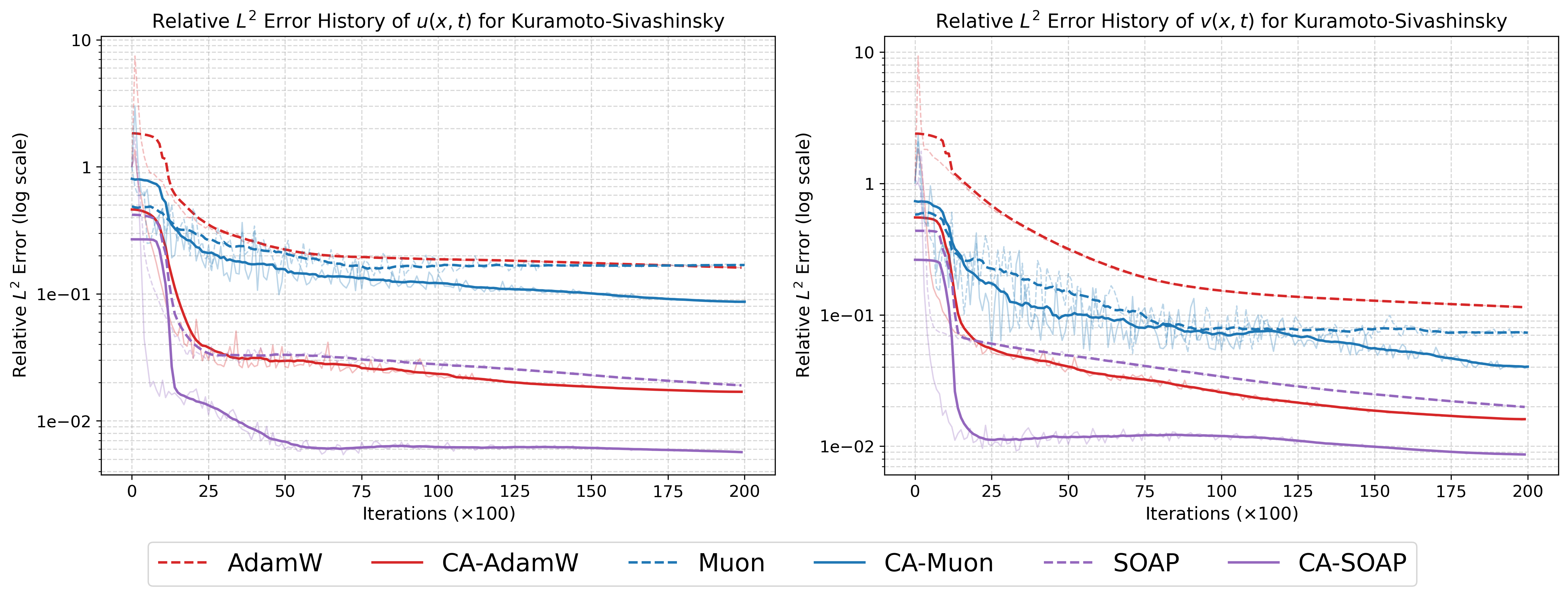} 
   \end{minipage}   
    \caption{History of the relative $L^2$ errors for the 2D Kuramoto--Sivashinsky system. The curves show the training evolution of the relative $L^2$ errors for both $u$ and $v$ under different optimizers.}\label{fig: KS Hist}
\end{figure}

\begin{table}[htbp]
    \centering
    \caption{\textbf{Errors for the 2D Kuramoto-Sivashinsky equation (lower is better).} Average relative $L^2$ error and absolute $L^\infty$ error of $(u,~v)$ for each optimizer, comparing the base method to its CA variant.}
    \label{tab: 2D KS table}
    
    \resizebox{0.85\textwidth}{!}{
        \begin{tabular}{lcccc}
        \toprule
        \textbf{Optimizers}
        & \multicolumn{2}{c}{Rel. $L^2$ error} 
        & \multicolumn{2}{c}{$L^\infty$ error} \\
        \cmidrule(lr){2-3} \cmidrule(lr){4-5}
        & $u$ & $v$ & $u$ & $v$ \\
        \hline

        \grow AdamW 
        & $1.44\times 10^{-1}$ & $9.61\times 10^{-2}$ & $1.37\times 10^{-1}$ & $2.19\times 10^{-1}$ \\
        
        \brow CA-AdamW
        & $1.79\times 10^{-2}$ & $1.76\times 10^{-2}$ & $5.66\times 10^{-2}$ & $4.16\times 10^{-2}$ \\
        
        \multicolumn{1}{l|}{\textbf{Error reduction (\%) $\uparrow$}}
            & \gbf{87.57\%} & \gbf{81.69\%} & 58.69\% & \gbf{81.00\%} \\
        \hline
        
        \grow Muon
        &$1.88\times 10^{-1}$ & $7.60\times 10^{-2}$ & $4.17\times 10^{-1}$ & $2.21\times 10^{-1}$ \\
        
        \brow CA-Muon
        & $8.70\times 10^{-2}$ & $3.99\times 10^{-2}$ & $1.20\times 10^{-1}$ & $9.29\times 10^{-2}$\\
        
        \multicolumn{1}{l|}{\textbf{Error reduction (\%) $\uparrow$}} 
            & 53.72\% & 47.50\% & 71.22\% & 57.96\% \\
        \hline
        
        \grow SOAP
         & $1.98\times 10^{-2}$ & $2.11\times 10^{-2}$ & $4.69\times 10^{-2}$ & $5.45\times 10^{-2}$ \\
        
        \brow CA-SOAP
        & \gbf{$\bf 4.70\times 10^{-3}$} & \gbf{$\bf 8.78\times 10^{-3}$} & \gbf{$\bf 9.37\times 10^{-3}$} & \gbf{$\bf 1.54\times 10^{-2}$} \\
        
        \multicolumn{1}{l|}{\textbf{Error reduction (\%) $\uparrow$}}
            & 76.26\% & 58.39\% & \gbf{80.02\%} & 71.74\% \\
        
        \bottomrule
        \end{tabular}
    }
    \vspace{-0.5em}
\end{table}

\subsection{Additional Comparison with Related Optimizers}
Beyond the baseline optimizers and their CA variants, we compare CA-AdamW with two related optimizer enhancement methods, Adan~\cite{xie2024adan} and ALTO~\cite{zhao2025exploring}. We focus on CA-AdamW because both methods are adaptive moment-based and thus are most fairly compared with it. Although Adan and ALTO also modify update dynamics beyond standard adaptive optimization, their mechanisms differ from ours: Adan uses gradient differences as an acceleration signal with fixed coefficients, and ALTO applies a fixed negative correction based on gradient differences. In contrast, our CA framework combines parameter displacement and gradient variation to construct a curvature-aware signal that dynamically adjusts the correction strength.

To further evaluate CA against related optimizer enhancement methods on a challenging PINN task, we conduct additional experiments on the 2D Kuramoto--Sivashinsky system, which involves high-order derivatives, nonlinear coupling, and complex spatiotemporal dynamics. As shown in Table~\ref{tab:ks_related_optimizers}, CA-AdamW consistently outperforms Adan and ALTO, indicating the benefit of its curvature-aware dynamic correction on difficult PINN optimization problems.
\begin{table}[htbp]
    \centering
   \caption{\textbf{Errors for the 2D Kuramoto--Sivashinsky equation (lower is better).} Average relative $L^2$ error and absolute $L^\infty$ error of $u$ and $v$ for CA-AdamW and two Adam-style optimizer enhancement methods, Adan and ALTO. \gbf{Green} denotes the best results.}

    \label{tab:ks_related_optimizers}
    
    \resizebox{0.85\textwidth}{!}{
        \begin{tabular}{lcccc}
        \toprule
        \textbf{Optimizers}
        & \multicolumn{2}{c}{Rel. $L^2$ error} 
        & \multicolumn{2}{c}{$L^\infty$ error} \\
        \cmidrule(lr){2-3} \cmidrule(lr){4-5}
        & $u$ & $v$ & $u$ & $v$ \\
        \hline

        Adan \cite{xie2024adan}
        & $4.77\times 10^{-2}$ & $4.73\times 10^{-2}$ & $8.99\times 10^{-2}$ & $9.32\times 10^{-2}$ \\

        ALTO \cite{zhao2025exploring}
        & $7.39\times 10^{-2}$ & $7.98\times 10^{-2}$ & $2.21\times 10^{-1}$ & $1.01\times 10^{-1}$ \\
        
        \brow CA-AdamW
        & \gbf{$\bf 1.79\times 10^{-2}$} & \gbf{$\bf1.76\times 10^{-2}$} & \gbf{$\bf5.66\times 10^{-2}$} & \gbf{$\bf 4.16\times 10^{-2}$} \\
        
        \bottomrule
        \end{tabular}
    }
    \vspace{-0.5em}
\end{table}
\section{Conclusion}
\label{sec:conclusions}
We proposed a curvature-aware optimization framework for PINNs that augments first-order optimizers with lightweight secant-based geometric adaptation. Motivated by the highly anisotropic and rapidly varying loss landscapes in PINN training, the framework uses consecutive gradient information to provide an adaptive predictive correction without explicitly forming second-order matrices.

Experiments on diverse PDE benchmarks show that the proposed method consistently improves convergence speed, training stability, and solution accuracy over standard optimizers, with particularly clear gains on challenging problems such as the 2D Kuramoto--Sivashinsky system.

Overall, these results indicate that lightweight secant-based curvature awareness is a practical and effective direction for PINN optimization. Future work includes strengthening the theoretical understanding of the adaptive mechanism and extending the framework to broader physics-informed learning settings, such as operator learning, inverse problems, and large-scale multiphysics systems.

\section*{Acknowledgments}
Ming Yan and Chenhao Si were partially supported by 
Guangdong Provincial Key Laboratory of Mathematical Foundations for Artificial Intelligence (2023B1212010001), Shenzhen Stability Science Program, and the Shenzhen Science and Technology Program under grant no. JCYJ20250604141043020.

\bibliographystyle{unsrt}  
\bibliography{ref}

\clearpage 
\appendix
\section{Algorithms for SOAP and Muon}
\label{append:Alogrithms for soap and muon}
\begin{figure}[!ht]
\centering
\begin{minipage}[t]{0.48\textwidth}
\begin{algorithm}[H]
\caption{SOAP Optimizer}
\label{alg:SOAP}

\textbf{Input:} Loss function $\ell$, initial parameters $\theta_0 \in \mathbb{R}^{m \times n}$, step sizes $\{\eta_k\}$, hyperparameters $\beta_1, \beta_2 \in [0,1)$ (momentum), $\beta_P \in [0,1)$ (preconditioner decay), $\varepsilon > 0$, $f$ (eigen-update frequency), $\lambda$ (weight decay)

\textbf{Output:} $\{\theta_k\}_{k=1}^T$.

\begin{enumerate}
    \item Initialize $\mathbf{m}_0 = \mathbf{0}$, $\mathbf{v}_0 = \mathbf{0}$, $\mathbf{L}_0 = \epsilon \mathbf{I}_m$, $\mathbf{R}_0 = \epsilon \mathbf{I}_n$, $\mathbf{U}_L = \mathbf{I}_m$, $\mathbf{U}_R = \mathbf{I}_n$
    
    \item \textbf{while} $k < T$ \textbf{do}
    
    \item \quad $\mathbf{G}_k = \nabla_\theta \ell (\theta_k, \zeta_k)$;
    
    \item \quad $\mathbf{L}_k = \beta_P \mathbf{L}_{k-1} + (1 - \beta_P) \mathbf{G}_k \mathbf{G}_k^\top$;

    \item \quad $\mathbf{R}_k = \beta_P \mathbf{R}_{k-1} + (1 - \beta_P) \mathbf{G}_k^\top \mathbf{G}_k$;
    \item \quad \textbf{if} $k \pmod f == 0$ \textbf{then}
    
    \item \quad \quad $\mathbf{U}_L, \mathbf{D}_L = \text{EigenDecomp}(\mathbf{L}_k)$; 
    
    \item \quad \quad $\mathbf{U}_R, \mathbf{D}_R = \text{EigenDecomp}(\mathbf{R}_k)$;
    
    \item \quad \textbf{end if}
    
    \item \quad $\tilde{\mathbf{G}}_k = \mathbf{U}_L^\top \mathbf{G}_k \mathbf{U}_R$;
    
    \item \quad $\mathbf{m}_k = \beta_1 \mathbf{m}_{k-1} + (1 - \beta_1) \tilde{\mathbf{G}}_k$; 
    
    \item \quad $\mathbf{v}_k = \beta_2 \mathbf{v}_{k-1} + (1 - \beta_2) \tilde{\mathbf{G}}_k^2$; 
    
    \item \quad $\hat{\mathbf{m}}_k = \mathbf{m}_k / (1 - \beta_1^k)$; \quad $\hat{\mathbf{v}}_k = \mathbf{v}_k / (1 - \beta_2^k)$;
    
    \item \quad $\mathbf{P}_k = \hat{\mathbf{m}}_k / (\sqrt{\hat{\mathbf{v}}_k} + \varepsilon)$;
    
    \item \quad $\Delta \theta_k = \mathbf{U}_L \mathbf{P}_k \mathbf{U}_R^\top$;
    
    \item \quad $\theta_{k+1} = \theta_k - \eta_k (\Delta \theta_k + \lambda \theta_k)$; 
    
    \item \textbf{end while}
\end{enumerate}
\end{algorithm}
\end{minipage}
\hfill
\begin{minipage}[t]{0.48\textwidth}
\begin{algorithm}[H]
\caption{Curvature-Aware SOAP}
\label{alg:CA-SOAP}

\textbf{Input:} Loss function $\ell$, initial parameters $\theta_0 \in \mathbb{R}^{m \times n}$, step sizes $\{\eta_k\}$, hyperparameters $\beta_1, \beta_2, \beta_a \in [0,1)$, $\beta_P \in [0,1)$, $\varepsilon > 0$, $f$ (eigen-update frequency), $\alpha_{\text{base}}$ (base trajectory), $\lambda$ (weight decay)

\textbf{Output:} $\{\theta_k\}_{k=1}^T$.

\begin{enumerate}
    \item Initialize $\mathbf{m}_0 = \mathbf{0}$, $\mathbf{v}_0 = \mathbf{0}$, $\mathbf{L}_0 = \epsilon \mathbf{I}_m$, $\mathbf{R}_0 = \epsilon \mathbf{I}_n$, $\mathbf{U}_L = \mathbf{I}_m$, $\mathbf{U}_R = \mathbf{I}_n$, \textcolor{red}{$\mathbf{A}_0 = \mathbf{0}$}, \textcolor{red}{$\mathbf{G}_0 = \mathbf{0}$, $\theta_{-1} = \theta_0$}
    
    \item \textbf{while} $k < T$ \textbf{do}
    
    \item \quad $\mathbf{G}_k = \nabla_\theta \ell (\theta_k, \zeta_k)$; 
    \item \quad \textcolor{red}{$\mathbf{A}_k = \beta_1 \mathbf{A}_{k-1} + (1 - \beta_a)(\mathbf{G}_k - \mathbf{G}_{k-1})$;}
    \item \quad \textcolor{red}{$\Delta \theta_k = \theta_k - \theta_{k-1}$;} 
    
    \item \quad \textcolor{red}{$\kappa_k = \frac{\langle \Delta \theta_k, \mathbf{G}_k - \mathbf{G}_{k-1} \rangle_F}{\|\Delta \theta_k\|_F^2}$;} 
    \item \quad \textcolor{red}{$\alpha_k = \alpha_{\text{base}} (1 + \tanh(-\kappa_k))$}
    
    \item \quad \textcolor{red}{$\mathbf{G}_{boost, k} = \mathbf{G}_k + \alpha_k \mathbf{A}_k$; }
    
    \item \quad $\mathbf{L}_k = \beta_P \mathbf{L}_{k-1} + (1 - \beta_P) \textcolor{red}{\mathbf{G}_{boost, k} \mathbf{G}_{boost, k}^\top}$; 
    
    \item \quad $\mathbf{R}_k = \beta_P \mathbf{R}_{k-1} + (1 - \beta_P) \textcolor{red}{\mathbf{G}_{boost, k}^\top \mathbf{G}_{boost, k}}$; 
    
    \item \quad \textbf{if} $k \pmod f == 0$ \textbf{then}
    
   \item \quad \quad $\mathbf{U}_L, \mathbf{D}_L = \text{EigenDecomp}(\mathbf{L}_k)$;
    
    \item \quad \quad $\mathbf{U}_R, \mathbf{D}_R = \text{EigenDecomp}(\mathbf{R}_k)$;
    
    \item \quad \textbf{end if}
    
    \item \quad $\tilde{\mathbf{G}}_k = \mathbf{U}_L^\top \textcolor{red}{\mathbf{G}_{boost, k}} \mathbf{U}_R$;
    
    \item \quad $\mathbf{m}_k = \beta_1 \mathbf{m}_{k-1} + (1 - \beta_1) \tilde{\mathbf{G}}_k$; 
    
    \item \quad $\mathbf{v}_k = \beta_2 \mathbf{v}_{k-1} + (1 - \beta_2) \tilde{\mathbf{G}}_k^2$; 
    
    \item \quad $\hat{\mathbf{m}}_k = \mathbf{m}_k / (1 - \beta_1^k)$; \quad $\hat{\mathbf{v}}_k = \mathbf{v}_k / (1 - \beta_2^k)$;
    
    \item \quad $\mathbf{P}_k = \hat{\mathbf{m}}_k / (\sqrt{\hat{\mathbf{v}}_k} + \varepsilon)$;
    
    \item \quad $\Delta_{step} = \mathbf{U}_L \mathbf{P}_k \mathbf{U}_R^\top$; 
    
    \item \quad $\theta_{k+1} = \theta_k - \eta_k (\Delta_{step} + \lambda \theta_k)$;
    
    \item \textbf{end while}
\end{enumerate}
\end{algorithm}
\end{minipage}
\end{figure}

\begin{figure}[!ht]
\centering
\begin{minipage}[t]{0.48\textwidth}
\begin{algorithm}[H]
\caption{Muon Optimizer}
\label{alg:Muon}

\textbf{Input:} Loss function $\ell$, initial parameters $\theta_0 \in \mathbb{R}^{m \times n}$ (assume $m \le n$), step sizes $\{\eta_k\}$, hyperparameter $\mu \in [0,1)$ (momentum), $\lambda$ (weight decay), $I$ (Newton-Schulz iterations)

\textbf{Output:} $\{\theta_k\}_{k=1}^T$.

\begin{enumerate}
    \item Initialize $\mathbf{M}_0 = \mathbf{0}$
    
    \item \textbf{while} $k < T$ \textbf{do}
    
    \item \quad $\mathbf{G}_k = \nabla_\theta \ell (\theta_k, \zeta_k)$; 
    
    \item \quad $\mathbf{M}_k = \mu \mathbf{M}_{k-1} + \mathbf{G}_k$;
    
    \item \quad $\mathbf{O}_k = \mu \mathbf{M}_k + \mathbf{G}_k$;
    
    \item \quad $\mathbf{X} = \mathbf{O}_k / \|\mathbf{O}_k\|_F$;
    
    \item \quad \textbf{for} $i = 1$ \textbf{to} $I$ \textbf{do} 
    
    \item \quad \quad $\mathbf{A} = \mathbf{X} \mathbf{X}^\top$;
    
    \item \quad \quad $\mathbf{X} = 1.5 \mathbf{X} - 0.5 \mathbf{A} \mathbf{X}$; 
    
    \item \quad \textbf{end for}
    
    \item \quad n$\Delta \theta_k = \mathbf{X} \cdot \max(1, \sqrt{n/m})$; 
    
    \item \quad $\theta_{k+1} = \theta_k - \eta_k (\Delta \theta_k + \lambda \theta_k)$; 
    
    \item \textbf{end while}
\end{enumerate}
\end{algorithm}
\end{minipage}
\hfill
\begin{minipage}[t]{0.48\textwidth}
\begin{algorithm}[H]
\caption{Curvature-Aware Muon}
\label{alg:CA-Muon}

\textbf{Input:} Loss function $\ell$, initial parameters $\theta_0 \in \mathbb{R}^{m \times n}$ (assume $m \le n$), step sizes $\{\eta_k\}$, hyperparameters $\mu \in [0,1)$ (momentum), $\alpha_{\text{base}}$ (base trajectory), $\lambda$ (weight decay), $I$ (Newton-Schulz iterations)

\textbf{Output:} $\{\theta_k\}_{k=1}^T$.

\begin{enumerate}
    \item Initialize $\mathbf{M}_0 = \mathbf{0}$, \textcolor{red}{$\mathbf{A}_0 = \mathbf{0}$}, \textcolor{red}{$\mathbf{G}_0 = \mathbf{0}$, $\theta_{-1} = \theta_0$}
    
    \item \textbf{while} $k < T$ \textbf{do}
    
    \item \quad $\mathbf{G}_k = \nabla_\theta \ell (\theta_k, \zeta_k)$; 
    
    \item \quad \textcolor{red}{$\mathbf{A}_k = \mu \mathbf{A}_{k-1} + (1 - \mu)(\mathbf{G}_k - \mathbf{G}_{k-1})$;}
    
    \item \quad \textcolor{red}{$\Delta \theta_k = \theta_k - \theta_{k-1}$;} 
    
    \item \quad \textcolor{red}{$\kappa_k = \frac{\langle \Delta \theta_k, \mathbf{G}_k - \mathbf{G}_{k-1} \rangle_F}{\|\Delta \theta_k\|_F^2}$;} 
    
    \item \quad \textcolor{red}{$\alpha_k = \alpha_{\text{base}} (1 + \tanh(-\kappa_k))$} 
    
    \item \quad $\textcolor{red}{\mathbf{G}_{boost, k} = \mathbf{G}_k + \alpha_k \mathbf{A}_k}$; 
    
    \item \quad $\mathbf{M}_k = \mu \mathbf{M}_{k-1} + \textcolor{red}{\mathbf{G}_{boost, k}}$; 
    
    \item \quad $\mathbf{O}_k = \mu \mathbf{M}_k + \textcolor{red}{\mathbf{G}_{boost, k}}$; 
    
    \item \quad $\mathbf{X} = \mathbf{O}_k / \|\mathbf{O}_k\|_F$;
    
    \item \quad \textbf{for} $i = 1$ \textbf{to} $I$ \textbf{do} 
    
    \item \quad \quad $\mathbf{A} = \mathbf{X} \mathbf{X}^\top$; 
    
    \item \quad \quad $\mathbf{X} = 1.5 \mathbf{X} - 0.5 \mathbf{A} \mathbf{X}$; 
    
    \item \quad \textbf{end for}
    
    \item \quad $\Delta_{step} = \mathbf{X} \cdot \max(1, \sqrt{n/m})$;
    
    \item \quad $\theta_{k+1} = \theta_k - \eta_k (\Delta_{step} + \lambda \theta_k)$; 
    
    \item \textbf{end while}
\end{enumerate}
\end{algorithm}
\end{minipage}
\end{figure}
\end{document}